\def\eqref#1{equation~\ref{#1}}
\def\1{\bm{1}}
\def\vx{{\bm{x}}}
\DeclareMathAlphabet{\mathsfit}{\encodingdefault}{\sfdefault}{m}{sl}
\SetMathAlphabet{\mathsfit}{bold}{\encodingdefault}{\sfdefault}{bx}{n}
\def\gX{{\mathcal{X}}}
\def\gY{{\mathcal{Y}}}
\def\sI{{\mathbf{1}}}
\DeclareMathOperator*{\argmax}{arg\,max}
\DeclareMathOperator*{\argmin}{arg\,min}
\newcommand{\ynoise}{\widetilde{y}}
\newcommand{\etanoise}{\widetilde{\eta}}
\newcommand{\loss}{L}
\newtheorem{theorem}{Theorem}
\newtheorem{lemma}{Lemma}
\newtheorem{definition}{Definition}
\newtheorem{assumption}{Assumption}
\newtheorem{remark}{Remark}
\DeclareMathOperator{\LR}{LR}
\DeclareMathOperator*{\pr}{Pr}
\newcommand{\AdaCorr}{\texttt{AdaCorr}}
\newcommand{\LRTCorr}{\texttt{LRT-Correction}}
\icmltitlerunning{Error-Bounded Correction of Noisy Labels}
\begin{document}\hfuzz=100pt

\twocolumn[
\icmltitle{Error-Bounded Correction of Noisy Labels}




\begin{icmlauthorlist}
\icmlauthor{Songzhu Zheng}{sbu-ams}
\icmlauthor{Pengxiang Wu}{rutgers}
\icmlauthor{Aman Goswami}{bain}
\icmlauthor{Mayank Goswami}{cuny}
\icmlauthor{Dimitris Metaxas}{rutgers}
\icmlauthor{Chao Chen}{sbu-bmi}
\end{icmlauthorlist}

\icmlaffiliation{sbu-ams}{Department of Applied Mathematics and Statistics, Stony Brook University, NY, USA}
\icmlaffiliation{sbu-bmi}{Department of Biomedical Informatics, Stony Brook University, NY, USA}
\icmlaffiliation{bain}{Bain \& Company, Bangalore, India.}
\icmlaffiliation{rutgers}{Department of Computer Science, Rutgers University, NJ, USA}
\icmlaffiliation{cuny}{Department of Computer Science, City University of New York, NY, USA}

\icmlcorrespondingauthor{Songzhu Zheng}{zheng.songzhu@stonybrook.edu}

\icmlkeywords{Supervised Learning, Label Noisy}

\vskip 0.3in
]



\printAffiliationsAndNotice{}  

\begin{abstract}
To collect large scale annotated data, it is inevitable to introduce label noise, i.e., incorrect class labels. To be robust against label noise, many successful methods rely on the noisy classifiers (i.e., models trained on the noisy training data) to determine whether a label is trustworthy. However, it remains unknown why this heuristic works well in practice. In this paper, we provide the first theoretical explanation for these methods. We prove that the prediction of a noisy classifier can indeed be a good indicator of whether the label of a training data is clean. Based on the theoretical result, we propose a novel algorithm that corrects the labels based on the noisy classifier prediction. The corrected labels are consistent with the true Bayesian optimal classifier with high probability. We incorporate our label correction algorithm into the training of deep neural networks and train models that achieve superior testing performance on multiple public datasets.
\end{abstract}

\section{Introduction}

Label noise is ubiquitous in real world data. It may be caused by  unintentional mistakes of manual or automatic annotators 
\citep{yan:2014, Veit:2017}.
It may also be introduced by malicious attackers \citep{Steinhardt_DataPoison_NIPS2017}. 
Noisy labels impair the performance of a model \citep{Smyth:1994,Brodley:1999}, especially a deep neural network, which tends to have strong memorization power \citep{Frenay:2014,Zhang_noise_ICLR2017}.
Improving the robustness of a model trained with noisy labels is a crucial yet challenging task in many applications \citep{Mnih_Aerial_Image_ICML2012,Wu_Face_Noise_IEEETrans2018}. 

Many methods have been proposed to train a robust model on data with label noise. 
One may re-calibrate the model by explicitly estimating a noise \emph{transition matrix}, namely, the probability of one label being corrupted into another \citep{goldberger_Adaptation_ICLR2017, Patrini:2017}. 
One may also introduce hidden layers \citep{reed2014training}, prior on data distribution \citep{lee2019robust} or modified loss function \cite{Rooyen:2015, shen2019trim, zhang2018general_loss} to improve the robustness of the model. 
However, these methods either assume strong global priors on the data or lack sufficient supervision for the neural network to achieve satisfying performance. Furthermore, global model-correction mechanisms tend to rely on a few parameters; estimating these parameters can be challenging and the error will lead to failing of the training. 

To adapt to heterogeneous noise pattern and to fully exploit the power of deep neural networks, \emph{data-re-calibrating} methods have been proposed to focus on individual data instead of an overall model adjustment \citep{Malach:2017,jiang:2018,Han:2018, tanaka2018jointlearn, wang2018openset, ren2018reweight, cheng2020learning}. 
These methods learn to re-calibrate the model on each individual datum depending on its own context. 
They gradually collect clean data whose labels are trustworthy. 
As more clean data are collected, the quality of the trained models improves.
These methods slowly accumulate useful/trustworthy information and eventually attain state-of-the-art quality models.

Despite the success of data-re-calibrating methods, their underlying mechanism remains elusive. It is unclear why the neural nets trained on noisy labels can help select clean data. A theoretical underpinning will not only explain the phenomenon, but also advance the methodology. One major challenge for these methods is to control the data re-calibration quality. 
It is hard to monitor the model's re-calibrating decision on individual data. An aggressive selection of clean data can unknowingly accumulate irreversible errors. On the other hand, an overly-conservative strategy can be very slow in training, or stops with insufficient clean data and mediocre models. A theoretical guarantee will help develop models with self-assurance that the decision on each datum is reasonably close to the truth.

In this paper, we provide \emph{the first theoretical explanation for data-re-calibrating methods}. Our main theorem states that a noisy classifier (i.e., one trained on noisy labels) can identify whether a label has been corrupted. In particular, we prove that when the noisy classifier has low confidence on the label of a datum, such label is likely corrupted. In fact, we can quantify the \textit{threshold of confidence}, below which the label is likely to be corrupted, and above which is it likely to be not. We also empirically show that the bound in our theorem is tight.

Our theoretical result not only explains existing data-re-calibrating methods, but also suggests a new solution for the problem. As a second contribution of this paper, we propose a novel method for noisy-labeled data. Based on our theorem and statistical principles, we verify the purity of a label through a likelihood ratio test w.r.t.~the prediction of a noisy classifier, and the threshold value of confidence. The label is corrected or left intact depending on the test result. We prove that this simple label-correction algorithm has a guaranteed success rate and will recover the true labels with high probability. We incorporate the label-correction algorithm into the training of deep neural networks. We validate our method on different datasets with various noise patterns and levels. 
Our theoretically-founded method outperforms state-of-the-arts due to its simplicity and due to its principled design. 

Our paper shows that a theorem that is well-grounded in applications will inspire elegant and powerful algorithms even in deep learning settings. Our contribution is two-fold:
\begin{itemize}[topsep=0pt, parsep=2pt, itemsep = 0pt,partopsep=0pt]
    \item We provide a theorem quantifying how a noisy classifier's prediction correlates to the purity of a datum's label. This provides theoretical explanation for data-re-calibrating methods for noisy labels. 
    \item Inspired by the theorem, we propose a new label-correction algorithm with guaranteed success rate. We train neural networks using the new algorithm and achieve superior performance.
\end{itemize}

The code of this paper can be found in \url{https://github.com/pingqingsheng/LRT.git}.

\subsection{Related Work}
One representative strategy for handling label noise is to model and employ noise transition matrix to correct the loss. For example,
\citet{Patrini:2017} propose to correct the loss function with estimated noise pattern. The resulting loss is an unbiased estimator of the ground truth loss, and enables the trained model to achieve better performance. However, 
such an estimator relies on strong assumptions and could be inaccurate in certain scenarios.
\citet{reed2014training} consider modeling the noise pattern with a hidden layer. The learning of this hidden layer is regularized with a feature reconstruction loss, yet without a guarantee that the true label distribution is learned.
Another method mentioned in their work is to minimize the entropy of neural network output;
however, this method tends to predict a single class.
To address this weakness, 
\citet{Hendrycks:19} propose to utilize a small number of trusted, clean data to pre-train a network and estimate the noise pattern. 
However, such clean data may not always be available in practice.

Alternatively, another direction proposes to design models that are intrinsically robust to noisy data. \citet{Crammer:2009} introduce a regularized confidence weighting learning algorithm (AROW),
which attempts to preserve the weight distribution as much as possible while requiring the model to maintain discrimination ability. The follow-up work (\citealt{Crammer:2010}) improves this algorithm by herding the updating direction via specific velocity field (NHERD), and achieves better performance. Both of these works impose constraints on parameters, which, however, could prevent classifiers from adapting to complex datasets.
Another similar strategy proposes to assume Gaussian distribution for features, and models the data with a robust generative classifier \cite{lee2019robust}. However, such an assumption may not generalize to other complex scenarios.

\citet{Arpit_Memorization_ICML2017} show that 
deep neural networks tend to learn meaningful patterns before they over-fit to noisy ones. Based on this observation, they propose to add Gaussian or adversarial noise to input when training with noisy labels, and empirically show that such data perturbation is able to make the resulting model more robust. Other commonly adopted techniques, such as weight decay and dropout, are also shown to be effective in increasing the robustness of trained classifier (\citealt{Arpit_Memorization_ICML2017}; \citealt{Zhang_noise_ICLR2017}). However, the intrinsic reasons for this phenomenon still remain unclear and overfitting to noisy label is extremely likely.
Data-re-calibrating methods select clean data while eliminating noisy ones during training. For example, \citet{Malach:2017} and \citet{Han:2018} train two networks simultaneously, and update the networks only with samples that are considered clean by both networks. Similarly, \citet{jiang:2018} also use two networks: the first one is pre-trained to learn a curriculum, and then utilized to select clean samples for training the second network. These methods deliver promising results but lack control of the quality of the collected clean data.

Finally, beyond deep learning framework, there are several theoretic works that demonstrate the robustness of a variety of losses to label noise (\citealt{long_2010_random_JMLR}; \citealt{natarajan:2013}; \citealt{Ghosh_riskmini_2015_NIPS}; \citealt{Rooyen:2015}).
 Following the work of (\citealt{Yizhen:2018}), \citet{gao:2016} propose an algorithm that can converge to the Bayesian optimal classifier under different noise settings. Moreover, they provide in-depth discussion regarding the performance of k-nearest neighbor (KNN) classifiers. However, the problem with KNN is that it is computationally intensive and difficult to be incorporated into a learning context. Within the framework of deep learning, there are more efforts that need to be made to bridge theory and practice.

\section{The Main Theorem: Probing Label Purity Using the Noisy Classifier}

Our main theorem answers the following question: without knowing the ground truth, how to decide whether a label is corrupted or not.
During training, the only information one can rely on is a noisy classifier, i.e., one that is trained on the corrupted labels. Data-re-calibrating methods use the noisy classifier to decide whether a datum is clean-labeled. However, these methods lack a theoretical justification. 

We establish the relationship between a noisy classifier and the purity of a label. We prove that if the classifier has low confidence on a datum with regard to its current label, then this label is likely corrupted. This result provides the first theoretical explanation of why noisy classifiers can be used to determine the purity of labels in previous methods. 

This section is organized as follows. We start by providing basic notations and assumptions. Next, we state the main theorem for binary classification and then extend it to the multiclass setting. We also use experiments on synthetic data and CIFAR10 to validate the tightness of our bound.

\subsection{Preliminaries and Assumptions}
We first focus on binary classification. Later the result will be extended to multiclass setting. Let $\mathcal{X}$ be the feature space, $\mathcal{Y}=\{0,1\}$ be the label space.  
The joint probability distribution, $D$, can be factored as $D(\vx,y) = \pr(y|\vx)\pr(\vx)$. We denote by $\eta(\vx) = \pr(y=1|\vx)$ the \emph{true conditional probability}. 
The \emph{risk} of a binary classifier $h:\gX\rightarrow \gY$ is $R(D,h) = \Pr_{(\vx,y) \sim D} [h(\vx) \neq y]$.
A \emph{Bayes optimal classifier} is the minimizer of the risk over all possible hypotheses, i.e., \mbox{$h^\ast = \argmin_h R(D,h)$}. 
It can be calculated using the true conditional probability, $\eta$,
\begin{align*}
    h^*(\vx) = 
    \sI_{ \left\{\eta(\vx)>\frac{1}{2} \right\}}(\mathbf{x}) := 
    \left\{
        \begin{array}{cl}
        1 & \eta(\vx) > \frac{1}{2}\\
        0 & \text{otherwise}
        \end{array}.
    \right.
\end{align*}

We assume $\eta$ satisfies the Tsybakov condition 
\citep{tsybakov2004optimal}. This condition, also called margin assumption, stipulates that the uncertainty of $\eta$ is bounded. In other words, the margin region close to the decision boundary, $\{\vx\in \mathcal{X} \mid \eta(\vx)=1/2\}$, has a bounded volume.

\begin{assumption}[Tsybakov Condition] There exist constants $C,\lambda>0$, and $ t_0 \in (0, \frac{1}{2}]$, such that for all $t \leq t_0$, 
$$\Pr\left[ \left| \eta(\vx) - \frac{1}{2} \right| \leq t \right] \leq Ct^\lambda.$$ 
\end{assumption}

This assumption is adopted in previous works such as \citep{chaud:2014, belkin:2018, qiao:2019}. However, we have not seen any empirical verification of the condition in real datasets. In this paper, we conduct experiments to verify this condition and provide empirical estimation of the constants $C$ and $\lambda$. Our experiments indicate that this condition holds with moderate values of the constants $C$ and $\lambda$. 

\textbf{The noisy label setting.}
Instead of samples from $D$, we are given a sample set with noisy labels
$\mathcal{S}=\{(\vx,\widetilde{y})\}$, where $\ynoise$ is the possibly corrupted label based on the true label $y$. We assume a \emph{transition probability} \mbox{$\tau_{i\rightarrow j} = \pr(\widetilde{y}=j | y=i)$}, i.e., the chance a true label $y$ is flipped from class $i$ to class $j$. For simplicity, we denote $\tau_{ij}=\tau_{i\rightarrow j}$. The transition probabilities $\tau_{01}$ and $\tau_{10}$ are independent of the true joint distribution $D$ and the feature $\vx$.
We denote the conditional probability of the noisy labels as $\widetilde{\eta}(\vx) = \pr(\widetilde{y}=1 | \vx)$. We call $\widetilde{\eta}$ the \emph{noisy conditional probability}. It is easy to verify that $\etanoise$ is linear to the true conditional probability, $\eta$: 
\begin{align*}
\widetilde{\eta}(\vx) 
&= (1-\tau_{10})\eta(\vx) + \tau_{01}[1-\eta(\vx)] \nonumber\\
&= (1-\tau_{01}-\tau_{10})\eta(\vx) + \tau_{01}.
\label{eq:etanoise}
\end{align*}

We intend to learn a classifier whose prediction is consistent with the Bayes optimal classifier $h^{\ast}$. Therefore, we call the prediction of $h^{\ast}$ the \emph{correct label}. 
\begin{definition}[Correct Label]
Given $\vx$, its \emph{correct label} is the Bayes optimal classifier prediction $h^{\ast}(\vx)$. 
\end{definition}
The correct label, $h^{\ast}(\vx)$, is subtly different from the true label, $y$. In particular, $h^{\ast}(\vx)$ is uniquely decided by $\eta(\vx)$, whereas $y$ is a sample from $\eta$. Since $h^\ast$ is our final goal, we focus on recovering the correct label, $h^{\ast}(\vx)$, instead of $y$. 

\subsection{The Main Theorem}

Our main theorem connects a noisy classifier \mbox{$f:\mathcal{X}\rightarrow \mathcal{Y}$} with the chance of a noisy label $\ynoise$ being correct. We assume $f$ is trained on the noisy labels and is trained well enough, i.e., $\epsilon$-close to the noisy conditional probability, $\etanoise$. For convenience, we denote by $f_{\ynoise}$ the classifier prediction of label being $\ynoise$, formally, $f_{\ynoise}(\vx) = f(\vx)$ if $\ynoise = 1$, and $1-f(\vx)$ otherwise. Define the estimation error $\epsilon := \| f - \etanoise \|_{\infty}$.

\begin{theorem}
\label{thm:main}
\label{epsilontheorem}

Assume $\eta(\vx)$ satisfies the Tsybakov condition with constants $C,\lambda>0$, and $t_0 \in (0,\frac{1}{2}]$. Assume $\epsilon \leq t_0(1-\tau_{10}-\tau_{01}).$ For $\Delta=\frac{1-|\tau_{10}-\tau_{01}|}{2}$, we have:
\[
\pr\nolimits_{(x,y)\sim D} \Big[\ynoise = h^*(\vx) , f_{\ynoise}(\vx) < \Delta\Big] \leq C \Big[ O(\epsilon) \Big]^{\lambda}.
\]
\end{theorem}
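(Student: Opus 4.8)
The plan is to show that the ``bad event'' $A := \{\ynoise = h^*(\vx)\} \cap \{f_{\ynoise}(\vx) < \Delta\}$ forces the feature $\vx$ into the narrow margin band $\{\,|\eta(\vx) - \tfrac12| < O(\epsilon)\,\}$, after which the Tsybakov condition finishes the job immediately. Write $a := 1 - \tau_{01} - \tau_{10}$ (with $a > 0$ in the sub-$50\%$-noise regime, which is also what makes $\etanoise$ invertible in $\eta$), so that the transition identity reads $\etanoise(\vx) = a\,\eta(\vx) + \tau_{01}$, an increasing affine map with inverse $\eta = (\etanoise - \tau_{01})/a$. The first step merely removes the classifier: since $\epsilon = \|f - \etanoise\|_\infty$, the low-confidence condition $f_{\ynoise}(\vx) < \Delta$ becomes a one-sided bound on $\etanoise$, namely $\etanoise(\vx) < \Delta + \epsilon$ when $\ynoise = 1$ and $\etanoise(\vx) > 1 - \Delta - \epsilon$ when $\ynoise = 0$.

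The heart of the argument is a two-case analysis on $\ynoise$, in each case combining the $\etanoise$-bound with the correctness condition $\ynoise = h^*(\vx)$ (which reads $\eta(\vx) > \tfrac12$ when $\ynoise=1$ and $\eta(\vx) \le \tfrac12$ when $\ynoise=0$). Inverting the affine map turns the $\etanoise$-bound into an $\eta$-bound, and the point is that $\Delta$ is calibrated so that the resulting threshold lands exactly at $\tfrac12$. Concretely I would verify the algebraic identities $\Delta - \tau_{01} = \tfrac{a}{2}$ and $1 - \Delta - \tau_{01} = \tfrac{a}{2}$, which hold in the two regimes $\tau_{10} \ge \tau_{01}$ and $\tau_{01} > \tau_{10}$ respectively; this is precisely where the form $\Delta = \tfrac{1 - |\tau_{10}-\tau_{01}|}{2}$ enters. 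In the regime and label value for which the identity applies, the event pins $\eta$ to $(\tfrac12, \tfrac12 + \tfrac{\epsilon}{a})$ (if $\ynoise=1$) or $(\tfrac12 - \tfrac{\epsilon}{a}, \tfrac12]$ (if $\ynoise=0$); for the complementary label value the constraint is strictly tighter (the region is a subset of the same band, and often empty). Either way every case yields $|\eta(\vx) - \tfrac12| < \epsilon/a$.

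To conclude, I would note that the $\ynoise=1$ and $\ynoise=0$ contributions project onto disjoint subsets of $\{\eta > \tfrac12\}$ and $\{\eta \le \tfrac12\}$ respectively, both lying inside the closed band $\{\,|\eta(\vx) - \tfrac12| \le \epsilon/a\,\}$. Hence $\Pr[A] \le \Pr_{\vx}[\,|\eta(\vx) - \tfrac12| \le \epsilon/a\,]$. The hypothesis $\epsilon \le t_0 a$ guarantees $t := \epsilon/a \le t_0$, so the Tsybakov condition applies and gives $\Pr[A] \le C(\epsilon/a)^{\lambda} = C[O(\epsilon)]^{\lambda}$, with hidden constant $1/a = 1/(1-\tau_{01}-\tau_{10})$.

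The main obstacle is purely bookkeeping: carrying the asymmetric rates $\tau_{01} \ne \tau_{10}$ through the inversion without losing the exact constant, and confirming that the specific $\Delta$ makes the band come out to radius precisely $\epsilon/a$ rather than something larger. The organizing insight is that $\Delta = \tfrac{a}{2} + \min(\tau_{01},\tau_{10})$ is the smaller of the two ``ideal'' thresholds $\tfrac{a}{2}+\tau_{01}$ (for $\ynoise=1$) and $\tfrac{a}{2}+\tau_{10}$ (for $\ynoise=0$) at which $\etanoise$ corresponds to $\eta = \tfrac12$; recognizing this explains why one label value always supplies the binding constraint while the other is slack, and keeps the case analysis clean.
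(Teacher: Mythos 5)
Your proposal is correct and follows essentially the same route as the paper: use $\|f-\etanoise\|_\infty\leq\epsilon$ to convert the low-confidence event into a bound on $\etanoise$, invert the affine map $\etanoise = (1-\tau_{01}-\tau_{10})\eta+\tau_{01}$ to pin $\eta(\vx)$ into a band of radius $\epsilon/(1-\tau_{01}-\tau_{10})$ around $\tfrac12$ (the paper packages the exact-threshold calibration of $\Delta$ into its Lemmas~\ref{lem:idealdelta} and \ref{lem:noisyeta}, which is the same case analysis you carry out directly), and finish with the Tsybakov condition. Your observation that the two label cases land in disjoint halves of the band is a minor tidiness gain over the paper's additive combination of the two cases, but the argument is otherwise identical.
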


\textbf{Implication of the theorem.} 
Intuitively, the theorem states that a noisy label $\ynoise$ has bounded probability to be correct if it has a low vote-of-confidence by $f$. The upper bound of the probability is controlled by $\epsilon$, the approximation error of $f$.  In other words, the better $f$ approximates $\etanoise$, the tighter the bound is. This justifies the usage of a good-quality $f$ to determine if $\ynoise$ is trustworthy. Later we will show $\epsilon$ is reasonably small in deep learning setting and the bound is tight in practice. 

We remark that the constant $\Delta$ and the constant hidden inside the big-O in the theorem depend on $\tau_{ij}$'s, which are unknown in practice. 
Based on this theorem, we will propose a new  label-correction algorithm that determines $\Delta$ robustly in practice without knowing $\tau_{ij}$'s. 


\subsubsection{Proof of Theorem~\ref{thm:main}}

{\textbf{Preliminary Lemmata.}} To prove this theorem, we need to first prove two lemmata. Lemma~\ref{lem:idealdelta} will show that if a classifier $g$ is a linear transformation of $\eta$, when the value $g_{\ynoise}$ is below a certain threshold, $\ynoise$ is unlikely to be consistent with the true Bayesian optimal decision, $h^\ast$. Next, Lemma~\ref{lem:noisyeta} states that since $\widetilde{\eta}(\vx)$ is a linear transformation of $\eta(\vx)$, Lemma~\ref{lem:idealdelta} will apply to $\widetilde{\eta}(\vx)$ and $\Delta$ can be set accordingly. Finally, based on the conclusion of Lemma~\ref{lem:noisyeta} and the Tsybakov condition, we can upperbound $\pr\left[\widetilde{y}=h^*(\vx), f_{\ynoise}(\vx)<\Delta\right]$ if $f$ is $\epsilon$-close to $\etanoise$.

\begin{lemma}\label{lem:idealdelta}
If a classifier $g$ depends linearly on $\eta$, i.e., $g(\vx) = a \eta(\vx) + b$ with $a,b> 0$. Set $\Delta = \min\left(\frac{a}{2}+b, 1-b-\frac{a}{2}\right)$. We have
\begin{equation}
\label{eq:lemma1}
\Pr\nolimits_{(x,y)\sim D}\Big[\widetilde{y}=h^*(\vx), g_{\ynoise}(\vx)<\Delta\Big] = 0
\end{equation}
\end{lemma}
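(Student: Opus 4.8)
The plan is to show that the event $\{\ynoise = h^*(\vx),\, g_{\ynoise}(\vx) < \Delta\}$ is in fact \emph{empty}; once emptiness is established, its probability vanishes under any distribution $D$, giving \eqref{eq:lemma1} directly. Since $\ynoise$ takes only the values $0$ and $1$, I would split into two cases according to $\ynoise$ and derive a contradiction with $g_{\ynoise}(\vx) < \Delta$ in each. The engine of both cases is the positivity of $a$: because $g = a\eta + b$ is strictly increasing in $\eta$, a threshold condition on $\eta(\vx)$ coming from $\ynoise = h^*(\vx)$ translates into a matching lower bound on $g_{\ynoise}(\vx)$.

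First suppose $\ynoise = 1$. Then the constraint $\ynoise = h^*(\vx)$ means $h^*(\vx) = 1$, which by the definition of the Bayes optimal classifier is equivalent to $\eta(\vx) > \tfrac{1}{2}$. Here $g_{\ynoise}(\vx) = g(\vx) = a\eta(\vx) + b$, and using $a > 0$ together with $\eta(\vx) > \tfrac{1}{2}$ gives $g_{\ynoise}(\vx) > \tfrac{a}{2} + b \geq \Delta$, where the last inequality is the first branch of the definition $\Delta = \min\!\left(\tfrac{a}{2}+b,\, 1-b-\tfrac{a}{2}\right)$. This contradicts $g_{\ynoise}(\vx) < \Delta$, so this case contributes no points to the event.

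Next suppose $\ynoise = 0$. Then $\ynoise = h^*(\vx)$ forces $\eta(\vx) \leq \tfrac{1}{2}$, and now $g_{\ynoise}(\vx) = 1 - g(\vx) = 1 - a\eta(\vx) - b$. Again using $a > 0$ and $\eta(\vx) \leq \tfrac{1}{2}$ yields $g_{\ynoise}(\vx) \geq 1 - \tfrac{a}{2} - b \geq \Delta$ by the second branch of the definition of $\Delta$, once more contradicting $g_{\ynoise}(\vx) < \Delta$. Since both cases are impossible, the event is empty and the lemma follows.

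The proof is essentially a clean case analysis, so I do not anticipate a genuine obstacle; the step I would be most careful about is the bookkeeping of strict versus non-strict inequalities at the decision boundary $\eta(\vx) = \tfrac{1}{2}$. The convention $h^*(\vx) = 0$ at $\eta(\vx) = \tfrac{1}{2}$ must line up with the non-strict bound $\eta(\vx) \leq \tfrac{1}{2}$ used in the second case, while the first case relies on the strict inequality $\eta(\vx) > \tfrac{1}{2}$; getting these to match is what keeps both chains of inequalities valid. Finally, taking $\Delta$ to be the \emph{minimum} of the two branches is exactly what makes both contradictions hold simultaneously, and this is the only place where the specific form of $\Delta$ is used.
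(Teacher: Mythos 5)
Your proof is correct and follows essentially the same route as the paper: a case split on $\ynoise\in\{0,1\}$, translation of $\ynoise=h^*(\vx)$ into a threshold on $\eta(\vx)$, and use of the two branches of $\Delta=\min\left(\tfrac{a}{2}+b,\,1-b-\tfrac{a}{2}\right)$ to rule out $g_{\ynoise}(\vx)<\Delta$. The only (cosmetic) difference is that you exploit $\Delta\leq\tfrac{a}{2}+b$ and $\Delta\leq 1-b-\tfrac{a}{2}$ directly to show the event is pointwise empty, whereas the paper rewrites each case as the probability of an interval for $\eta(\vx)$ and checks that the interval is empty under each branch of the minimum separately.
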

 
\begin{proof}
To calculate $\Pr\nolimits_{(x,y)\sim D}\Big[\widetilde{y}=h^*(\vx), g_{\ynoise}(\vx)<\Delta\Big]$, we enumerate two cases: 
\item{Case 1:} $\ynoise = 1$. 
Observe $h^{\ast}(\vx)=1$ iff $\eta(\vx) > 1/2$;  $g_{\ynoise}(\vx)=g(\vx) = a\eta(\vx)+b<\Delta$ iff $\eta(\vx) < \frac{\Delta - b}{a}$. We have:
\begin{equation}
\Pr\Big[\ynoise=h^*(\vx), g_{\ynoise}(\vx) < \Delta\Big]
= \Pr\left[\frac{1}{2} < \eta(\vx) < \frac{\Delta-b}{a}\right].
\label{eq:prob1}
\end{equation}
We next show that this probability is 0 for the chosen $\Delta= \min\left(\frac{a}{2}+b, 1-b-\frac{a}{2}\right)$. If  $\Delta = \frac{a}{2} + b$, the probability is zero as $\frac{\Delta-b}{a} = \frac{1}{2}$. Otherwise, $\Delta = 1-b-\frac{a}{2}$. 
We know that $1-b-\frac{a}{2}< \frac{a}{2} + b$. Therefore, $1-2b<a$. In this case, 
\[\frac{\Delta-b}{a} = \frac{1-2b}{a} - \frac{1}{2} < 1-\frac{1}{2} = \frac{1}{2}.
\]
Thus we have $\Pr\left[\frac{1}{2}<\eta(\vx)<\frac{\Delta-b}{a}\right] =0$.

\item{Case 2:} $\ynoise =0$. Observe that $h^{\ast}(\vx) = 0$ iff $\eta(\vx)\leq 1/2$; $g_{\ynoise}(\vx) = 1-g(\vx) = 1-[a\eta(\vx)+b]<\Delta$ iff $\eta(\vx) > L:= \frac{1-b-\Delta}{a}$, we have:
\begin{equation*}
\label{eq:prob2}
\Pr\Big[\ynoise=h^*(\vx), g_{\ynoise}(\vx) < \Delta\Big]
= \Pr\left[{L} < \eta(\vx) < \frac{1}{2}\right].
\end{equation*}
Similar to Case 1, by checking when $\Delta = \frac{a}{2} + b$ and when $\Delta = 1-b-\frac{a}{2}$, we can verify that $\Pr\left[\frac{1-b-\Delta}{a} < \eta(\vx) < \frac{1}{2}\right]=0$.

This proves Equation (\ref{eq:lemma1}) and completes the proof.
\end{proof}

\begin{lemma}\label{lem:noisyeta}
Let $\Delta = \frac{1-|\tau_{10}-\tau_{01}|}{2}$. Let $\widetilde{\eta}_1 = \widetilde{\eta}$ and $ \widetilde{\eta}_0 = 1- \widetilde{\eta}$.  


\begin{equation}
\label{eq:lemma2}
\Pr\nolimits_{(x,y)\sim D}\Big[\widetilde{y}=h^*(\vx), \widetilde{\eta}_{\ynoise}(\vx) < \Delta\Big]  = 0.
\end{equation}
\end{lemma}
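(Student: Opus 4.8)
The plan is to apply Lemma~\ref{lem:idealdelta} directly, exploiting the fact that the noisy conditional probability $\widetilde{\eta}$ is an affine function of $\eta$ with positive slope and nonnegative intercept. Recall from the expression for $\widetilde{\eta}$ that
\[
\widetilde{\eta}(\vx) = (1-\tau_{01}-\tau_{10})\eta(\vx) + \tau_{01},
\]
so $\widetilde{\eta}$ has exactly the form $g(\vx) = a\eta(\vx)+b$ required by Lemma~\ref{lem:idealdelta}, with $a = 1-\tau_{01}-\tau_{10}$ and $b = \tau_{01}$. Under the standing non-degeneracy assumption on the noise ($\tau_{01}+\tau_{10}<1$, implicit already in the hypothesis $\epsilon\leq t_0(1-\tau_{10}-\tau_{01})$ of the main theorem) we have $a>0$, and $b=\tau_{01}>0$, so the hypotheses of Lemma~\ref{lem:idealdelta} are satisfied and its conclusion applies verbatim to $\widetilde{\eta}_{\ynoise}$.

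It then remains only to check that the threshold $\Delta = \min\left(\frac{a}{2}+b,\,1-b-\frac{a}{2}\right)$ supplied by Lemma~\ref{lem:idealdelta} coincides with the value $\frac{1-|\tau_{10}-\tau_{01}|}{2}$ claimed here. Substituting $a=1-\tau_{01}-\tau_{10}$ and $b=\tau_{01}$ and simplifying, the first argument of the minimum becomes $\frac{1-(\tau_{10}-\tau_{01})}{2}$ and the second becomes $\frac{1+(\tau_{10}-\tau_{01})}{2}$. I would finish with a one-line case split on the sign of $\tau_{10}-\tau_{01}$: when $\tau_{10}\geq\tau_{01}$ the first expression is the smaller, and when $\tau_{10}<\tau_{01}$ the second is, and in both cases the minimum equals $\frac{1-|\tau_{10}-\tau_{01}|}{2}$. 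Feeding this $\Delta$ back into Lemma~\ref{lem:idealdelta} yields Equation~(\ref{eq:lemma2}).

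Since the argument is essentially a substitution into the previous lemma, there is no genuine obstacle; the only points requiring care are (i) verifying the sign conditions $a,b>0$ so that Lemma~\ref{lem:idealdelta} is legitimately invoked, and (ii) the absolute-value simplification in the final step, which is precisely where the two candidate thresholds from the minimum collapse into the single symmetric expression $\frac{1-|\tau_{10}-\tau_{01}|}{2}$ stated in the lemma.
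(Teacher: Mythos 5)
Your proposal is correct and follows essentially the same route as the paper, which likewise proves the lemma by invoking Lemma~\ref{lem:idealdelta} with $g=\widetilde{\eta}$, $a=1-\tau_{01}-\tau_{10}$, and $b=\tau_{01}$. Your explicit check that $\min\left(\frac{a}{2}+b,\,1-b-\frac{a}{2}\right)$ collapses to $\frac{1-|\tau_{10}-\tau_{01}|}{2}$ is a detail the paper leaves implicit, and it is carried out correctly.
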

\begin{proof} 
Recall $\widetilde{\eta}(\vx) = (1-\tau_{01}-\tau_{10})\eta(\vx) + \tau_{01}$, in which $\tau_{01}$ and $\tau_{10}$ are transition probabilities. We can directly prove this lemma using Lemma \ref{lem:idealdelta} by setting $g = \etanoise$ with $a = 1-\tau_{01}-\tau_{10}$ and $b = \tau_{01}$. 
\end{proof}

\begin{figure*}[!htb]
 \centering
 \begin{tabular}{ccc}
 \includegraphics[width=0.27\textwidth]{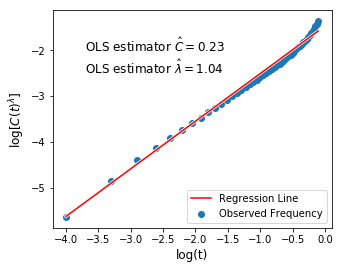} & 
 \includegraphics[width=0.25\textwidth]{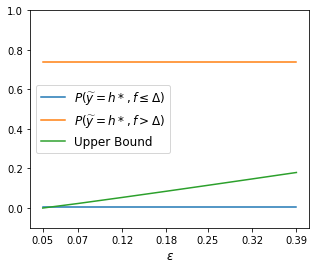} & 
 \includegraphics[width=0.25\textwidth]{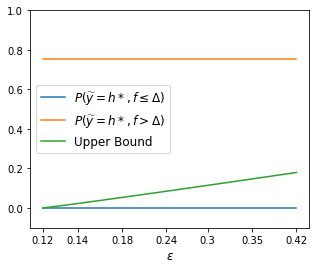} \\
 (a) & (b) & (c)
 \end{tabular}
 \caption{Synthetic experiment using CIFAR10 at noise level 20\%. (a): Check of Tsybakov condition using linear regression. Where y-axis is the proportion of data points at distance $t$ from decision boundary. (b): Proportion of labels that are not correct (not consistent with Bayes optimal decision rule) and the proposed upper bound. (c). Same as (b) but labels are corrupted with asymmetric noise. }
 \label{fig:cifar10_synthetic}
 \end{figure*}
 
\textbf{Proof of Theorem \ref{thm:main} using the Lemmata.} 
\begin{proof}
When $\ynoise = 1$, $f_{\ynoise}(\vx) = f(\vx) \geq \etanoise(\vx)-\epsilon$. 
\[{\small
\Pr\left[\widetilde{y}=h^*(\vx), f_{\widetilde{y}}(\vx) < \Delta\right] \leq \Pr\left[\widetilde{y}=h^*(\vx), \etanoise(\vx)-\epsilon < \Delta\right] 
}\]
Substituting $\Delta$ with $\Delta + \epsilon$ into equation~(\ref{eq:prob1}), we have:
\begin{align*}
& \Pr\left[\widetilde{y}=h^*(\vx)=1, \etanoise(\vx)-\epsilon < \Delta\right]\nonumber\\
= & \Pr\left[\widetilde{y}=h^*(\vx)=1, \etanoise(\vx)<  \Delta+\epsilon\right]\nonumber\\
= &
\Pr\left[\frac{1}{2} < \eta(\vx) < \frac{\Delta+\epsilon-\tau_{01}}{1-\tau_{01}-\tau_{10}}\right] \nonumber\\
\label{eq:thm1:eq2}
\end{align*}
Similar to Lemma \ref{lem:idealdelta}, by discussing the cases when $\Delta = \frac{1+\tau_{10}-\tau_{01}}{2}$ and when $\Delta= \frac{1+\tau_{01}-\tau_{10}}{2}$, we can show that  $\frac{\Delta-\tau_{01}}{1-\tau_{01}-\tau_{10}} < \frac{1}{2}$.
Based on the Tsybakov condition, we have
{\small
\begin{align*}
& \Pr\left[\frac{1}{2} < \eta(\vx) < \frac{\Delta-\tau_{01}}{1-\tau_{01}-\tau_{10}} + \frac{\epsilon}{1-\tau_{01}-\tau_{10}}\right]\nonumber\\ \leq &   
\Pr\left[\frac{1}{2} < \eta(\vx) < \frac{1}{2} + \frac{\epsilon}{1-\tau_{01}-\tau_{10}}\right] 
\leq C\left(\frac{\epsilon}{1-\tau_{01}-\tau_{10}}\right)^\lambda
\end{align*}}
This implies that: 
\begin{equation*}
    \Pr\left[\widetilde{y}=h^*(\vx)=1, f_{\widetilde{y}}(\vx) < \Delta\right] 
    \leq C\left(\frac{\epsilon}{1-\tau_{01}-\tau_{10}}\right)^\lambda
    \label{eq:thm1:eq4}
\end{equation*}

Similar to case 1 of Lemma~\ref{lem:idealdelta}, by using equation~(\ref{eq:lemma2}) for the case when $\ynoise = 0$, we can prove that 
\begin{align*}
& \Pr\left[\widetilde{y}=h^*(\vx)=0, f_{\widetilde{y}(\vx)} < \Delta\right] \nonumber\\
\leq &  \Pr\left[\widetilde{y}=h^*(\vx)=0, 1-\etanoise(\vx)-\epsilon < \Delta\right]\nonumber\\ 
= & \Pr\left[\frac{1-\tau_{01}-\Delta}{1-\tau_{10}-\tau_{01}}- \frac{\epsilon}{1-\tau_{10}-\tau_{01}} < \eta(\vx) < \frac{1}{2}\right] \nonumber\\
\leq &   
\Pr\left[ \frac{1}{2} - \frac{\epsilon}{1-\tau_{01}-\tau_{10}} < \eta(\vx) < \frac{1}{2}\right] \nonumber\\ 
\leq & C\left(\frac{\epsilon}{1-\tau_{01}-\tau_{10}}\right)^\lambda
\label{eq:thm1:eq5}
\end{align*}
Combining the two cases ($\ynoise=1$) and ($\ynoise=0$) completes the proof. 
\end{proof}

 
\begin{remark}
Indeed, we can also prove a bound for the opposite case: when $f_{\widetilde{y}}$ is highly confident, $\ynoise$ is correct with high probability. In this paper, we only focus on the bound in theorem~\ref{thm:main} as we only want to identify incorrect labels and fix them.
\end{remark}

\subsection{Multiclass Setting} 
Theorem \ref{thm:main} can be generalized to a multiclass setting. Let $\ynoise$ be the observed (possibly) corrupted label, $\eta_i(\vx) = \Pr(y=i \mid \vx)$ and $\widetilde{\eta}_i(\vx) = \Pr(\widetilde{y}=i \mid \vx)$. Recall $f_i(\vx)$ is the classifier's prediction on label $i$. Define $N_c$ to be the number of total classes and $[N_c] = \{1,2,\cdots,N_c\}$.

First we extend the Tsybakov condition to multiclass scenario \cite{chen:2006}. Denote by ${u_\vx}$ the Bayes optimal classifier prediction, or say the class predicted by $\eta(\vx)$, formally $ {u_\vx} := h^{\ast}(\vx) = \argmax_{i} \eta_i(\vx)$. Denote by ${s_\vx}$ the second best prediction, ${s_\vx}:=\argmax_{i\neq u_\vx} \eta_i(\vx)$. The difference between their corresponding true conditional probability is a non-negative function, whose zero level set $\{\vx|\eta_{{u_\vx}}(\vx)-\eta_{{s_\vx}}(\vx) = 0\}$ is the decision boundary of $h^*$. We assume the Tsybakov condition around the margin of this decision boundary: $\exists C,\lambda>0$ and $\exists t_0\in (0,1]$, such that for all $t\leq t_0$,
\begin{equation}
\Pr\big[\eta_{{u_\vx}}(\vx) - \eta_{{s_\vx}}(\vx) \leq t\big] \leq Ct^\lambda
\label{eq:multiclass-tsybakov}
\end{equation}

For any pair of labels $i,j \in [N_c]$, we have the linear relationship $\widetilde{\eta}_i(\vx) = \sum_{j \in [N_c]}\tau_{ji}\eta_j(\vx)$. Define $m_\vx := \argmax_i f_i(\vx)$. Define the estimation error $\epsilon := \max_{\vx, i} \left|f_i(\vx) - \widetilde{\eta}_i(\vx)\right|$.

\begin{theorem}
\label{thm:multiclass}
Assume $\eta(\vx)$ fulfills multi-class Tsybakov condition for constants $C, \lambda>0$ and $t_0 \in (0,1]$. Assume that $\epsilon \leq t_0\min\limits_i \tau_{i,i}$.
For $\Delta = \min\left[1, \min\limits_\vx[\tau_{\ynoise, \ynoise}\eta_{s_\vx}(\vx)+\sum\limits_{j\neq \ynoise}\tau_{j,\ynoise}\eta_j(\vx)]\right]$:
\[ {
\Pr_{(x,y)\sim D} \Big[\ynoise = h^*(\vx), f_{\ynoise}(\vx) < \Delta\Big] \leq C\left[O(\epsilon)\right]^\lambda 
}
\]
\end{theorem}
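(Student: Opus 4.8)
The plan is to reuse the three-step architecture of the binary proof, replacing the role of $\eta(\vx)-\tfrac12$ by the true margin $\eta_{u_\vx}(\vx)-\eta_{s_\vx}(\vx)$ and paying the Tsybakov price only once $f$ is swapped in for $\widetilde{\eta}$. Throughout I would condition on the event $\{\ynoise = h^*(\vx)\} = \{\ynoise = u_\vx\}$, which lets me substitute $\ynoise$ by the Bayes label $u_\vx$ inside every $\tau$-indexed expression and inside the definition of $\Delta$.

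First I would establish the multiclass analogue of Lemma~\ref{lem:noisyeta}: on the event $\ynoise = u_\vx$ one has $\widetilde{\eta}_{\ynoise}(\vx) \geq \Delta$, hence $\Pr[\ynoise = h^*(\vx),\, \widetilde{\eta}_{\ynoise}(\vx) < \Delta] = 0$. This follows by expanding $\widetilde{\eta}_{u_\vx}(\vx) = \tau_{u_\vx,u_\vx}\eta_{u_\vx}(\vx) + \sum_{j\neq u_\vx}\tau_{j,u_\vx}\eta_j(\vx)$ and using $\eta_{u_\vx}(\vx) \geq \eta_{s_\vx}(\vx)$ (as $u_\vx$ is the argmax and $s_\vx$ the runner-up) to lower-bound it by $Q(\vx) := \tau_{u_\vx,u_\vx}\eta_{s_\vx}(\vx) + \sum_{j\neq u_\vx}\tau_{j,u_\vx}\eta_j(\vx)$, which is at least $\Delta$ since $\Delta$ is defined as a truncation of the infimum over $\vx$ of exactly this quantity.

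Next I would introduce $f$ just as in the binary case: since $\epsilon = \max_{\vx,i}|f_i(\vx)-\widetilde{\eta}_i(\vx)|$, we have $f_{\ynoise}(\vx) \geq \widetilde{\eta}_{\ynoise}(\vx) - \epsilon$, so $\{f_{\ynoise}(\vx) < \Delta\} \subseteq \{\widetilde{\eta}_{\ynoise}(\vx) < \Delta + \epsilon\}$ and it suffices to bound $\Pr[\ynoise = u_\vx,\, \widetilde{\eta}_{\ynoise}(\vx) < \Delta + \epsilon]$. The crux, and the step I expect to be the main obstacle, is converting slack in the \emph{noisy} confidence into slack in the \emph{true} margin. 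The key algebraic identity is $\widetilde{\eta}_{u_\vx}(\vx) - Q(\vx) = \tau_{u_\vx,u_\vx}\big(\eta_{u_\vx}(\vx)-\eta_{s_\vx}(\vx)\big)$, which cleanly isolates the margin. Combined with $Q(\vx)\geq\Delta$, the event $\widetilde{\eta}_{\ynoise}(\vx) < \Delta + \epsilon$ forces $\tau_{u_\vx,u_\vx}(\eta_{u_\vx}(\vx)-\eta_{s_\vx}(\vx)) < \Delta + \epsilon - Q(\vx) \leq \epsilon$, i.e.\ $\eta_{u_\vx}(\vx)-\eta_{s_\vx}(\vx) < \epsilon/\tau_{u_\vx,u_\vx} \leq \epsilon/\min_i\tau_{i,i}$.

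Finally I would set $t = \epsilon/\min_i\tau_{i,i}$; the hypothesis $\epsilon \leq t_0\min_i\tau_{i,i}$ guarantees $t \leq t_0$, so the multiclass Tsybakov condition~(\ref{eq:multiclass-tsybakov}) applies and yields $\Pr[\eta_{u_\vx}(\vx)-\eta_{s_\vx}(\vx) \leq t] \leq Ct^\lambda = C(\epsilon/\min_i\tau_{i,i})^\lambda = C[O(\epsilon)]^\lambda$, which closes the bound. The subtlety to watch is uniformity: $u_\vx$, $s_\vx$, and the relevant transition coefficients all depend on $\vx$, so the $\min_\vx$ inside the definition of $\Delta$ and the $\vx$-free constant $\min_i\tau_{i,i}$ are precisely what make both the ideal-$\Delta$ step and the margin conversion hold simultaneously across all $\vx$.
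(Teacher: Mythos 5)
Your proposal is correct and follows essentially the same architecture as the paper's proof of the binary case (an ideal-threshold lemma showing $\Pr[\ynoise=h^*(\vx),\,\widetilde{\eta}_{\ynoise}(\vx)<\Delta]=0$, then an $\epsilon$-perturbation from $\widetilde{\eta}$ to $f$, then the Tsybakov condition applied to the resulting margin band), with the identity $\widetilde{\eta}_{u_\vx}(\vx)-Q(\vx)=\tau_{u_\vx,u_\vx}\bigl(\eta_{u_\vx}(\vx)-\eta_{s_\vx}(\vx)\bigr)$ playing exactly the role that the linear relation $\widetilde{\eta}=(1-\tau_{01}-\tau_{10})\eta+\tau_{01}$ plays in Lemmata~\ref{lem:idealdelta} and~\ref{lem:noisyeta}. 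The derived constant $\epsilon/\min_i\tau_{ii}$ inside the big-$O$ is consistent with the theorem's hypothesis $\epsilon\leq t_0\min_i\tau_{ii}$, so no gap remains.
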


The proof of Theorem~\ref{thm:multiclass} will be provided in supplementary material.

\subsection{Empirical Validation of the Bound} 
\label{sec:validation}
To better understand the Tsybakov condition assumption and the bound in our theorem, we conduct the following experiment. On the CIFAR10 dataset, we train deep neural networks to approximate relevant functions. We use these functions to estimate the constants $C$ and $\lambda$ in the Tsybakov condition. Using these constants, we calculate the bound in Theorem \ref{thm:multiclass} as a function of $\epsilon$ and check if it is tight.

\begin{figure*}[htb!]
\centering
\begin{tabular}{cc}
\includegraphics[width=.37\textwidth]{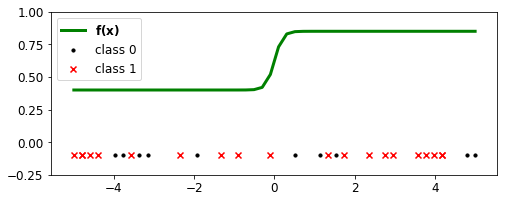} &
\includegraphics[width=.37\textwidth]{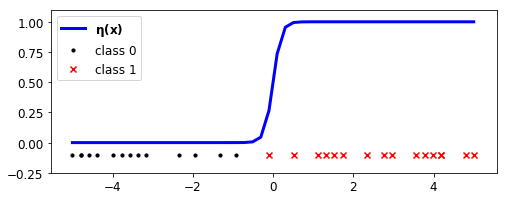} \\
(a) Noisy labels and $f$. & (b) Corrected labels and $\eta$.  \\
\includegraphics[width=.37\textwidth]{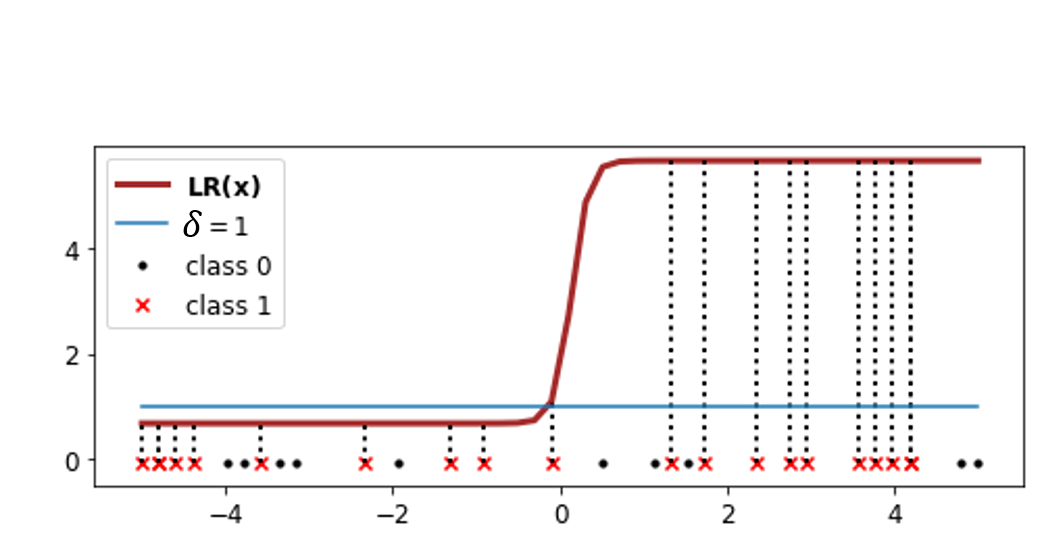} &
\includegraphics[width=.37\textwidth]{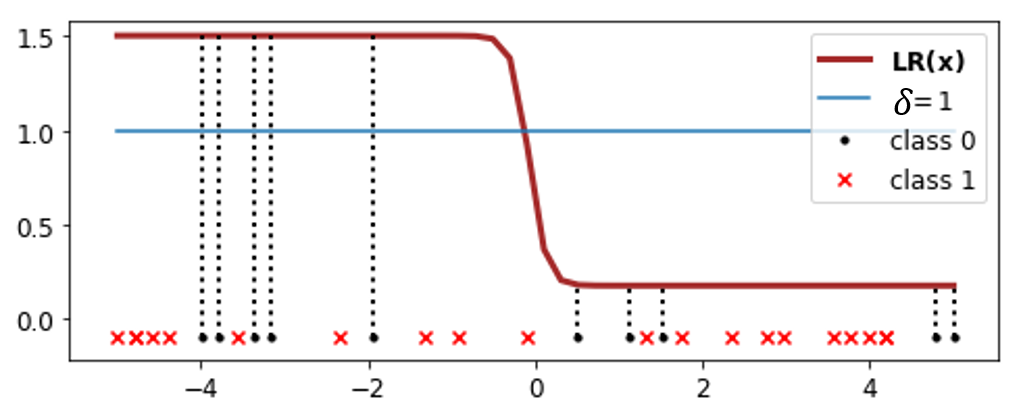} \\
(c) $\LR$ for $\ynoise=1$. & (d) $\LR$ for $\ynoise=0$.
\end{tabular}
\caption{An illustration of the label correction algorithm. $\delta$ is set to 1. (a): a corrupted sample and its corresponding classifier prediction $f$. (b): after correction, the labels are consistent with the true conditional probability, $\eta$. (c): likelihood ratio for $\ynoise=1$. Data with $x<0$ are corrected to $\etanoise_{new}=0$ as $LR(\vx)$ are below $\delta=1$. (d): likelihood ratio for $\ynoise=0$. Data with $x>0$ are corrected to $\etanoise_{new}=1$ as $LR(\vx)$ are below $\delta=1$.}
\label{fig:synth}
\vspace{-.2in}
\end{figure*}
To estimate $C$ and $\lambda$, we approximate the true conditional probability $\eta$ using a deep neural network trained on the original clean-labeled CIFAR10 data. We densely sample $t$ between 0 and 0.9. For each $t$, we empirically evaluate the left hand side (LHS) probability of Equation (\ref{eq:multiclass-tsybakov}) and then use these values to estimate $C$ and $\lambda$ via regression.
In particular, for each $t$ we calculate LHS of Equation (\ref{eq:multiclass-tsybakov}) using the frequency $p_t = \frac{1}{n}\sum_{i=1}^n \mathbf{1}_{\{\eta_{m_\vx}(\vx)-\eta_{s_\vx}(\vx) \leq t\}}(\vx)$, in which $n$ is the number of data. 
If the RHS bound is tight, we can use $\log p_t$ to approximate $\log(Ct^\lambda) $.  $\log(Ct^\lambda) = \log C + \lambda \log t$. As shown in Figure \ref{fig:cifar10_synthetic}(a), we plot all $(\log t, \log(Ct^\lambda))$ pairs as blue dots and estimate $C$ and $\lambda$ via linear regression (red line).
We observe that the samples are quite close to linear. Indeed, we could get ordinary least square (OLS) estimator of constant $C$ and $\lambda$ with high confidence (determinant coefficient $R^2 = 0.99$, p-value $ < 1e-53$). The estimated $C$ and $\lambda$ are $0.23$ and $1.04$ respectively.

Next, we verify our bound in Theorem \ref{thm:multiclass}. Using the estimated $C$ and $\lambda$, we can calculate the bound (RHS of Equation (\ref{eq:multiclass-tsybakov})) as a function of $\epsilon$ (the constant in the big-O is provided in the supplemental material). In Figure \ref{fig:cifar10_synthetic}(b), we plot the bound function in green curve. We compare this bound with the LHS of Equation (\ref{eq:multiclass-tsybakov}) which we can empirically evaluate. In particular, we train a noisy classifier $f$ by training a neural network on noisy labels (symmetric noise level 20\%, see Section \ref{sec:exp} for details). Using $f$, we can count the number of data points which has $f_{\ynoise} \leq \Delta$ and meanwhile $\ynoise$ is equal to $h^\ast(\vx)$ (calculated using $\eta$: the clean-label-trained neural network). This gives us the LHS of Equation (\ref{eq:multiclass-tsybakov}), which is the probability of a label being correct when $f$ has low confidence (blue line in Figure \ref{fig:cifar10_synthetic}(b)). Similarly, we can calculate the probability of a label being correct when $f$ has high confidence (orange line in Fig.~\ref{fig:cifar10_synthetic}(b)). We also carry out the same experiment on a different noise setting (asymmetric noise level 20\%, see Sec.~\ref{sec:exp} for details).

\textbf{Discussion.}
On CIFAR10 dataset, we estimated the constants of Tsybakov condition to be $C=0.23$ and $\lambda=1.04$ with high confidence. This means our bound (Equation (\ref{eq:multiclass-tsybakov}) is almost linear. As observed in Figure \ref{fig:cifar10_synthetic}(b) and (c), the bound is rather small (only up to 0.2 when the approximation error of the classifier, $\epsilon$, is below 0.4). Furthermore, the empirically evaluated chance of $\ynoise$ being correct when $f$ has low confidence (blue lines Figure \ref{fig:cifar10_synthetic}) is almost zero, well below the curve of the bound. In Figure \ref{fig:cifar10_synthetic}(b) The fact that the blue and green line intersects at $\epsilon=0.06$ implies that $\epsilon$ can be as small as 0.06. Similarly, Figure \ref{fig:cifar10_synthetic}(c) implies $\epsilon$ can be as small as 0.12. Finally, we note that the orange lines are well above the blue ones. This means when $f$ has high confidence on $\ynoise$, there is a high chance $\ynoise$ is correct. In other words, by comparing $f_{\ynoise}$ with a properly chosen constant $\Delta$, we can identify most data with corrupted labels.

We also conduct experiments on synthetic data (generated using multivariate normal distribution). In such case, we can calculate $\eta$ and $\etanoise$ exactly. The estimated $C$ and $\lambda$ are $0.6$ and $1.3$ respectively. More details about the synthetic experiments can be found in the supplemental material. 

In conclusion, experiments on synthetic and on CIFAR10 datasets show that the constants in Tsybakov condition are rather small and the bound in our theorem is almost linear to $\epsilon$. We also note the bound is generally small/tight even in deep learning setting. Thresholding $f$'s confidence does detect corrupted labels accurately.

\section{The Algorithm: Likelihood Ratio Test for Label Correction}

Our theoretical insight inspires a new algorithm for label correction. We propose to directly test the confidence level of the noisy classifier to determine whether a label is correct. 
One additional requirement is that if we decide that a label is incorrect, we also need to decide what is the correct label. Therefore, instead of checking the confidence level, we check the likelihood ratio between $f$'s confidence on $\ynoise$ and its confidence on its own label prediction, i.e., $m_\vx$. Specifically, we check the likelihood ratio 
\begin{equation*}
\LR(f,\vx,\ynoise) = f_{\ynoise}(\vx) / f_{m_\vx}(\vx).
\label{eq:lr}
\end{equation*}

We compare this likelihood ratio with a predetermined threshold $\delta$. The value of $\delta$ is given in the next theorem.
 This is essentially a hypothesis testing on the null hypothesis \mbox{$H_0: \ynoise = h^{\ast}(\vx)$}. If $\LR(f,\vx,y) < \delta$, we reject the null hypothesis and flip the label $\ynoise_{new} = m_\vx$. Otherwise, the label remains unchanged, $\ynoise_{new} = \ynoise$. If $\ynoise=m_\vx$ then the likelihood ratio is 1, $\ynoise_{new} = m_\vx = \ynoise$. Detailed algorithm is provided in Procedure \ref{alg:LRT-multi}.
See Figure \ref{fig:synth} for an illustration of the algorithm in a binary classification case. 

\vspace{-.2in}
\begin{table}[h]
\centering
~
\begin{minipage}[t]{0.5\textwidth}
\begin{algorithm}[H]
\caption{\texttt{LRT-Correction}}
\label{alg:LRT-multi}
\begin{algorithmic}[1]
\REQUIRE $(\vx, \ynoise), f(\vx), \delta$.
\ENSURE $\widetilde{y}_{new}$
\STATE $m_\vx:=\argmax_{i} f_i(\vx)$
\STATE $\LR(f,\vx,\ynoise):= f_{\ynoise}(\vx)/f_{m_\vx}(\vx)$
\IF {$\LR(f,\vx,\ynoise) < \delta$}
\STATE $\widetilde{y}_{new} = m_\vx$
\ELSE 
\STATE $\widetilde{y}_{new} = \widetilde{y}$
\ENDIF
\end{algorithmic}
\end{algorithm}
\end{minipage}
~~~ 
\end{table}
\vspace{-.1in}


We will show in the following theorem that the LRT correction algorithm is guaranteed to make proper correction and clean most of the corrupted labels. In particular, we show that in practice if we have a reasonable approximation $\hat{\delta}$ to the theoretically optimal $\delta$, the algorithm flips $\ynoise$ to the correct label (the Bayes optimal prediction, $h^{\ast}(\vx)$) with a good chance. Recall the approximation error of the classifier is $\epsilon := \max_{\vx, i} \left|f_i(\vx) - \widetilde{\eta}_i(\vx)\right|$. 

We consider two cases: (1) the label being flipped $y_{new} = m_{\vx}$; and (2) the label remaining the same $y_{new} = \ynoise$. Each case has its own ideal $\delta$. We bound the probability of obtaining a correct label with $\epsilon$ and $\xi$. Here $\xi$ is the difference between the chosen $\hat{\delta}$ and the ideal $\delta$. We also introduce an additional term, $\Psi$, denoting the probability that the true label is neither $\ynoise$ nor $m_\vx$, formally, $\Psi = \pr_{(\vx, y)\sim D}\left[u_\vx \notin \{m_\vx, \ynoise\}\right]$.

\begin{theorem}
$\forall i,j \in [N_c]$, assume $\eta(\vx)$ fulfills multi-class Tsybakov condition for constants $C>0$, $\lambda>0$, $t_0\in(0,1]$. 

Case 1 (Label flipped by \texttt{LRT-Corr($(\vx,\ynoise)$,$f(\vx)$,$\hat{\delta}$)}): let $\delta_1 =  \min\limits_\vx \left[ \frac{1}{f_{m_\vx}(\vx)}\left(\tau_{\ynoise, \ynoise}\eta_{s_\vx}(\vx)+\sum\limits_{j\neq \ynoise}\tau_{j,\ynoise}\eta_j(\vx)\right)\right]$ and $\xi_1 := |\hat{\delta}-\delta_1|$. Assume $\xi_1\leq \delta_1$ and  $\epsilon \leq \min\left(\frac{t_0 \delta_1^2 \min_i\tau_{ii}-\xi_1^2-\xi_1}{\delta_1^2}, (t_0-\xi_1)\min\limits_i \tau_{ii}\right)$. Then:
 $\Pr_{(x,y)\sim D} \left[ \ynoise_{new} = {h^*(\vx)} , \text{$\ynoise$ is flipped }\right]$ is at least \mbox{\small{$1 - C \left[ O(\max(\epsilon,\xi_1)) \right]^{\lambda} - \Psi$}}.

Case 2 (Label preserved by \texttt{LRT-Corr($(\vx,\ynoise)$,$f(\vx)$,$\hat{\delta}$)}):
let $\delta_2 = \max\limits_\vx \left[\frac{f_{\ynoise}(\vx)}{\tau_{m_\vx, m_\vx}\eta_{s_\vx}(\vx)+\sum\limits_{j\neq m_\vx}\tau_{j,m_\vx}\eta_j(\vx)}\right]$ and $\xi_2 := |\hat{\delta}-\delta_2|$. Assume $\xi_2 \leq \delta_2$ and $\epsilon \leq \min\left(\frac{t_0 \delta_2^2 \min_i\tau_{ii}-\xi_2^2-\xi_2}{\delta_2^2}, (t_0-\xi_2)\min\limits_i \tau_{ii}\right)$.\\ Then: $\Pr_{(x,y)\sim D} \left[ \ynoise_{new} = h^*(\vx), \text{ $\ynoise$ isn't flipped} \right]$ is at least \small{\mbox{$1-  C\left[ O(\max(\epsilon, \xi))\right]^{\lambda}-\Psi$}}. 
\label{thm:main3}
\end{theorem}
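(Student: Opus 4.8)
The plan is to prove both cases from a single template: bound the probability of the \emph{bad} event (the algorithm emits the wrong label) and read off the complementary lower bound. The key structural observation is that whenever the Bayes-optimal class $u_\vx = h^\ast(\vx)$ coincides with the label the algorithm actually outputs, the output is correct; hence the only failure modes are (i) $u_\vx$ is neither $\ynoise$ nor $m_\vx$, an event of probability exactly $\Psi$ that no threshold rule can repair, and (ii) $u_\vx \in \{\ynoise, m_\vx\}$ but the likelihood-ratio test chooses the wrong one of the two. So in each case I would write $\Pr[\text{correct}] \ge 1 - \Psi - \Pr[\text{case (ii)}]$ and devote the rest of the argument to bounding the case-(ii) probability by $C[O(\max(\epsilon,\xi))]^\lambda$ via the multiclass Tsybakov condition.

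For Case 1 the test flips, so the output is $m_\vx$ and case (ii) means $u_\vx = \ynoise$ (note a flip forces $\LR<\hat\delta\le 1$, hence $\ynoise\neq m_\vx$). I would turn the flip inequality into a margin statement. Flipping means $f_{\ynoise}(\vx) < \hat\delta\, f_{m_\vx}(\vx) \le (\delta_1+\xi_1) f_{m_\vx}(\vx)$, and the definition of $\delta_1$ as a minimum over $\vx$ gives the pointwise bound $\delta_1 f_{m_\vx}(\vx) \le \tau_{\ynoise,\ynoise}\eta_{s_\vx}(\vx) + \sum_{j\neq\ynoise}\tau_{j,\ynoise}\eta_j(\vx)$. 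Substituting $f_{\ynoise}\ge\etanoise_{\ynoise}-\epsilon$ and the identity $\etanoise_{\ynoise}=\sum_j\tau_{j,\ynoise}\eta_j$, the common $\sum_{j\neq\ynoise}$ terms cancel and leave $\tau_{\ynoise,\ynoise}(\eta_{\ynoise}-\eta_{s_\vx}) < \epsilon+\xi_1$. On the event $u_\vx=\ynoise$ this is precisely a one-sided margin bound $\eta_{u_\vx}(\vx)-\eta_{s_\vx}(\vx) < (\epsilon+\xi_1)/\tau_{\ynoise,\ynoise}=O(\max(\epsilon,\xi_1))$, so Tsybakov bounds its probability by $C[O(\max(\epsilon,\xi_1))]^\lambda$. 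This is, in effect, a $\xi_1$-perturbed, pointwise version of the argument behind Theorem~\ref{thm:multiclass}, and the assumed bounds on $\epsilon$ are exactly what force the right-hand margin to stay in $(0,t_0]$ where Tsybakov applies.

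Case 2 is the mirror image but technically heavier. Now the test preserves, the output is $\ynoise$, and case (ii) means $u_\vx=m_\vx\neq\ynoise$, i.e.\ I must control a \emph{false preserve}. Preserving means $f_{\ynoise}(\vx)\ge\hat\delta f_{m_\vx}(\vx)\ge(\delta_2-\xi_2)f_{m_\vx}(\vx)$, while the definition of $\delta_2$ as a maximum gives $f_{\ynoise}(\vx)\le\delta_2[\tau_{m_\vx,m_\vx}\eta_{s_\vx}(\vx)+\sum_{j\neq m_\vx}\tau_{j,m_\vx}\eta_j(\vx)]$. Chaining these two inequalities and using $f_{m_\vx}\ge\etanoise_{m_\vx}-\epsilon$ together with $\etanoise_{m_\vx}=\sum_j\tau_{j,m_\vx}\eta_j$, the shared $\sum_{j\neq m_\vx}$ terms cancel inside $\etanoise_{m_\vx}-\text{num}_2$, leaving $\delta_2\tau_{m_\vx,m_\vx}(\eta_{m_\vx}-\eta_{s_\vx})\le \delta_2\epsilon+\xi_2$; on $u_\vx=m_\vx$ this is again the margin $\eta_{u_\vx}-\eta_{s_\vx}$, and Tsybakov closes the case with the same $C[O(\max(\epsilon,\xi_2))]^\lambda$ bound.

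The hard part will be Case 2. Unlike the flip condition, which directly upper-bounds the single quantity $f_{\ynoise}$, the preserve condition couples $f_{\ynoise}$ and $f_{m_\vx}$ multiplicatively through $\hat\delta$, so extracting a clean one-sided bound on $\eta_{m_\vx}-\eta_{s_\vx}$ requires dividing by $\delta_2$ and carefully controlling the cross terms between the classifier error $\epsilon$ and the threshold error $\xi_2$. This is precisely where the quadratic $\xi^2$ and the $\delta^2$ factors in the hypotheses on $\epsilon$ enter: they are the conditions guaranteeing that the resulting margin bound is simultaneously nonnegative and at most $t_0$, the only regime in which Tsybakov may be invoked. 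Once that reduction is secured, both cases conclude identically, and summing the case-(ii) bound with $\Psi$ yields the stated lower bound $1 - C[O(\max(\epsilon,\xi))]^\lambda - \Psi$.
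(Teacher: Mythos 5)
Your proposal is correct and follows essentially the same route as the paper: translate the LRT outcome into an inequality on $\etanoise$ via the $\epsilon$-approximation, cancel the shared $\sum_{j\neq\cdot}\tau_{j,\cdot}\eta_j(\vx)$ terms so that only the margin $\tau_{\cdot\cdot}\left(\eta_{u_\vx}(\vx)-\eta_{s_\vx}(\vx)\right)$ survives, invoke the multiclass Tsybakov condition (with the hypotheses on $\epsilon$ keeping the margin threshold inside $(0,t_0]$), and charge $\Psi$ for the event $u_\vx\notin\{m_\vx,\ynoise\}$. The only caveat is interpretive rather than mathematical: the theorem's probability must be read as $1-\Pr\left[\text{flipped and } \ynoise_{new}\neq h^*(\vx)\right]$ (i.e., a guarantee on the correctness of the action taken), which is exactly what your bad-event decomposition establishes.
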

\vspace{-1em}

\subsection{Training Deep Nets with LRT-Correction}\label{hack}
\label{sec:dnn-training}

We incorporate the proposed label-correction into the training of deep neural networks.
Similar to other data-re-calibrating methods, our training algorithm continuously trains a deep neural network while correcting the noisy labels.
Procedure \ref{alg:euclid} is the pseudocode of the  training method, called \AdaCorr. It trains a neural network model iteratively. Each iteration includes both label correction and model training steps. In label correction step, the prediction of the current neural network, $f$, is used to run LRT test on all training data, and to correct their labels according to the test result.
Since $f$ is used to approximate the conditional probability $\etanoise$, we use the softmax layer output of the neural network as $f$.
After the labels of all training data are updated, we use them to train the neural network incrementally. 
We continue this iterative procedure until the training converges.

We also have a burn-in stage in which we train the network using the original noisy labels for $m$ epochs. During the burn-in stage, we use the original cross-entropy loss, $\loss_{CE}$. Afterwards, we add an additional retroactive loss, with the intention of stabilizing the network and avoiding overfitting.  

After the burn-in stage, we want to avoid overfitting of the neural network, so that its output better approximates $\etanoise$.
To achieve this goal, we introduce a \emph{retroactive loss} term $\loss_{retro}(f(\vx),\ynoise)$. The idea is to enforce the consistency between $f$ and the prediction of the model at a previous epoch, $f^{r}$. It has been observed that a neural network at earlier training stage tends to learn the true pattern rather than to overfit the noise \citep{Arpit_Memorization_ICML2017}.
Formally, the loss can be written as $\sum_{c=1}^{N_c} f^{r}_c(\vx)\log f_c(\vx)$, in which $N_c$ is the number of possible label classes. The training loss is the sum of the retroactive loss and the cross-entropy loss: 
\begin{align*}
\loss(f(\vx),\ynoise,f^{r}) 
&= \loss_{retro}(f(\vx),f^{r}(\vx)) + \loss_{CE}(f(\vx),\ynoise) \\
&= \sum_{c=1}^{N_c} f^{r}_c(\vx)\log f_c(\vx)+\sum_{c=1}^{N_c} \ynoise_c\log f_c(\vx).
\end{align*}
\vspace{-0.12in}

\begin{algorithm}[htbp!]
\caption{\AdaCorr}\label{alg:euclid}
\begin{algorithmic}[1]
\REQUIRE $S = \{(\vx,\ynoise)\}$, $\delta$, $m$, $T$
\FOR{epoch=1 to $m$} 
\STATE Train neural network with $\loss_{CE}$
\ENDFOR
\STATE $f^{r} = $ current model prediction
\FOR{epoch=$m+1$ to $T$} 
\IF{epoch $\geq m+10$}
\STATE $f=$ current model prediction
\FORALL{$(\vx,\ynoise)\in S$}
\STATE $\ynoise_{new}$= \LRTCorr($(\vx,\ynoise)$,$f$,$\delta$)
\STATE $\ynoise = \ynoise_{new}$
\ENDFOR
\ENDIF
\STATE Train using $\loss_{retro}+\loss_{CE}$, with $f^{r}$ and $\ynoise$
\ENDFOR
\end{algorithmic}
\end{algorithm}
\vspace{-.1in}

In the experiment we evaluate our method on 4 public datasets: CIFAR10, CIFAR100, MNIST and ModelNet40 (see Section \ref{sec:exp} for more details). Based on previous observations \citep{Arpit_Memorization_ICML2017}, on CIFAR10 and CIFAR100 datasets, a neural network takes about 30 epochs to fit the true pattern before overfitting the noise. We use this number as the burn-in stage length $m$. For easier datasets like MNIST and ModelNet40, we set $m$ to be slightly smaller (25). As for $\delta$, setting $\delta$ to be slightly smaller than 1 seems sufficient. Our Theorem \ref{thm:main3} guarantees that the bound is affected almost linearly (as $\lambda\approx 1$ per Section \ref{sec:validation}) to the error of the manually picked $\delta$ from the optimal one. 

\section{Experiments}
\label{sec:exp}
In this section we empirically evaluate our proposed method with several datasets, where noisy labels are injected according to specified noise transition matrices. 

\textbf{Datasets.} We use the following datasets: MNIST (\citealt{mnist}), CIFAR10 (\citealt{cifar100}), CIFAR100 (\citealt{cifar100}) and ModelNet40 (\citealt{modelnet40}). 
MNIST consists of $28 \times 28$ grayscale images with 10 categories. It contains 60,000 images, and we use 45,000 for training, 5,000 for validation and 10,000 for testing. CIFAR10 and CIFAR100 consist of the same 60,000 images whose size is $ 32 \times 32 \times 3$. CIFAR10 has 10 classes while CIFAR100 has 100 fine-grained classes. Similar to MNIST, we split 90\% and 10\% data from the official training set for the training and validation, respectively, and use the official test set for testing. ModelNet40 contains 12,311 CAD models from 40 categories, where 8,859 are used for training, 984 for validation and the remaining 2,468 for testing. We follow the protocol of \cite{PointNet} to convert the CAD models into point clouds by uniformly sampling 1,024 points from the triangular mesh and normalizing them within a unit ball. In all experiments, we use early stopping on validation set to tune hyperparameters and report the performance on test set.

\textbf{Baselines.} We compare the proposed method with the following methods: (1) \textit{Standard}, which trains the network in a standard manner, without any label resistance technique; (2) \textit{Forward Correction} (\citealt{Patrini:2017}), which explicitly estimates the noise transition matrix to correct the training loss; (3) \textit{Decoupling} (\citealt{Malach:2017}), which trains two networks simultaneously and updates the parameters on selected data whose labels are possibly clean; (4) \textit{Coteaching} (\citealt{Han:2018}), which also trains two networks but exchanges their error information for network updating; (5) \textit{MentorNet} (\citealt{jiang:2018}), which learns a curriculum to filter out noisy data; (6) \textit{Forgetting} \citep{Arpit_Memorization_ICML2017}, which uses dropout to help deep models resist label noise. (7) \textit{Abstention} (\citealt{sunil:2019}), which regularizes the network with abstention loss to ensure model robustness under label noise.

\textbf{Experimental setup.} For the classification of MNIST, CIFAR10 and CIFAR100, we use preactive ResNet-34 (\citealt{he2016identity}) as the backbone for all the methods. On ModelNet40, we use PointNet. We train the models for 180 epochs to ensure that all the methods have converged. We utilize RAdam (\citealt{Radam}) for the network optimization, and adopt batch size 128 for all the datasets. We use an initial learning rate of 0.001, which is decayed by 0.5 very 60 epochs. We also update $f^r$ to $f$ once at epoch $m+40$ to reflect better predictive power of network after several epochs. The experimental results are listed in Table~\ref{tab_results}. As is shown, our method overall achieves the best performance across the datasets under different noise settings. 

\textbf{Clothing 1M.} We also evaluate our method on a large scale Clothing 1M dataset \citep{Xiao:2015}, which consists of 1M images with real-world noisy labels. We use pre-trained ResNet-50 and train the model using SGD for 20 epochs. Our method achieves accuracy 71.47\%. It outperforms Standard (68.94\%), Forward Correction (69.84\%) and Backward Correction \cite{Patrini:2017} (69.13\%), where we take the number from the original paper directly. Note that other baselines (Forgetting, Decoupling, MentorNet, Coteaching and Abstention) did not report results on this dataset.

\begin{table}[h]
\vspace{-12pt}
\caption{Performance on Clothing 1M Dataset}
\label{cloth1M}
\begin{center}
\begin{tabular}{cc}
\toprule
Method & Accuracy($\%$) \\
\midrule
Standard & 68.94 \\
Forward  & 69.84 \\
Backward & 69.13 \\
AdaCorr  & 71.74 $\pm$ 0.12\\
\bottomrule
\end{tabular}
\end{center}
\vspace{-12pt}
\end{table}

\textbf{Discussion.} Our method outperform state-of-the-arts over a broad spectrum of noise patterns and levels. This is due to the relatively simple procedure our theoretically guaranteed algorithm. Looking closely, in Figure \ref{fig:convergence}, we draw convergence curves on CIFAR10 with 0.4 uniform noise. On the left, we show the curves of our proposed AdaCorr method. 
The model continues to flip labels to correct ones. Meanwhile, it fits with the corrected labels $y_{new}$ and the test accuracy on clean labels does not drop. This shows that the model and the label correction are improving in a harmonic fashion and do not collapse. On the right, we show the curves of the Standard method. Without label correction, the model overfits with noisy labels and the performance on test data degrades catastrophically.

\begin{figure*}[hbtp]
    \centering
        \includegraphics[scale=0.45]{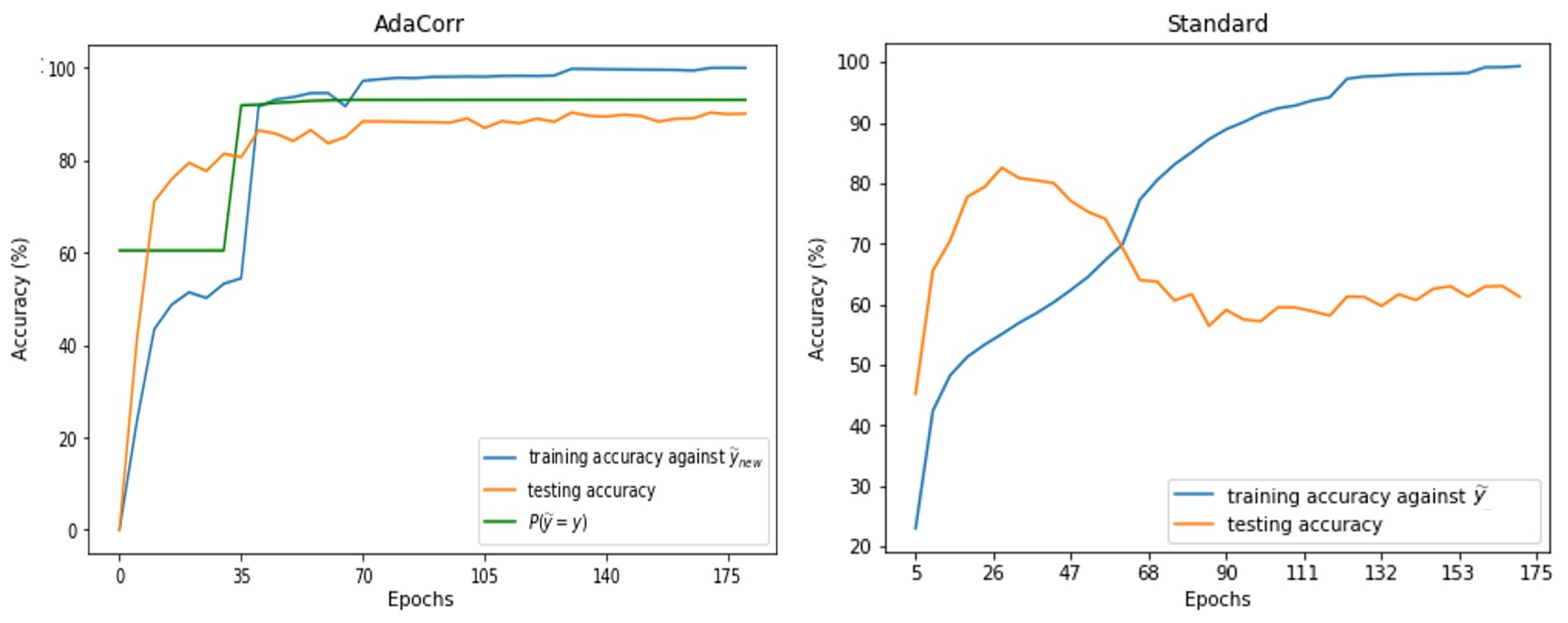}
    \caption{Convergence curves for CIFAR10 with 40\% uniform noise. Left: AdaCorr - training accuracy evaluated against the corrected label ($y_{new}$) (cyan), testing accuracy against clean label (orange), and the proportion of correct label (green). Right: Standard - training accuracy against noisy label ($\widetilde{y}$) and testing accuracy against clean label.}
    
    \label{fig:convergence}
\end{figure*}

\begin{table*}[htbp]
\caption{The classification accuracy of different methods.}
\centering
\begin{tabular}[width=\textwidth]{*6{p{15mm}}|*3{p{15mm}}}
\hline
    \multirow{2}{*}{Data Set} & \multirow{2}{*}{Method} & \multicolumn{4}{c|}{Noise Level of Uniform Flipping} & \multicolumn{3}{c}{Noise Level of Pair Flipping} \\ \cline{3-9} 
    && 0.2 & 0.4 & 0.6 & 0.8 & 0.2 & 0.3 & 0.4\\ \hline
    \multirow{10}{*}{MNIST\vspace{5mm}} 
    & Standard & 99.0 $\pm$ 0.2 & 98.7 $\pm$ 0.4 & 98.1 $\pm$ 0.3 & 91.3 $\pm$ 0.9 & 99.3 $\pm$ 0.1 & 99.2 $\pm$ 0.1 & 98.8 $\pm$ 0.1 \\
    & Forgetting & 99.0 $\pm$ 0.1 & 98.8 $\pm$ 0.1 & 97.7 $\pm$ 0.2 & 62.6 $\pm$ 8.9 & 99.3 $\pm$ 0.1 & 96.5 $\pm$ 2.0 & 89.7 $\pm$ 1.9  \\
    & Forward & 99.1 $\pm$ 0.1 & 98.7 $\pm$ 0.2 & 98.0 $\pm$ 0.4 & 89.6 $\pm$ 4.8 & 99.4 $\pm$ 0.0 & 99.2 $\pm$ 0.2 & 96.5 $\pm$ 4.4\\
    & Decouple & 99.3 $\pm$ 0.1 & 99.0 $\pm$ 0.1 & 98.5 $\pm$ 0.2 & 94.6 $\pm$ 0.2 & 99.4 $\pm$ 0.0 & 99.3 $\pm$ 0.1 & 99.1 $\pm$ 0.2\\
    & MentorNet & 99.2 $\pm$ 0.2 & 98.7 $\pm$ 0.1 & 98.1 $\pm$ 0.1 & 87.5 $\pm$ 5.2 & 98.6 $\pm$ 0.4 & 99.1 $\pm$ 0.1 & 98.9 $\pm$ 0.1\\
    & Coteach & 99.1 $\pm$ 0.2 & 98.7 $\pm$ 0.3 & 98.2 $\pm$ 0.3 & 95.7 $\pm$ 0.7 & 99.1 $\pm$ 0.1 & 99.0 $\pm$ 0.2 & 98.9 $\pm$ 0.2\\
    & Abstention & 94.0 $\pm$ 0.3 & 76.8 $\pm$ 0.3 &  49.6 $\pm$ 0.1 & 21.2 $\pm$ 0.5 & 94.3 $\pm$ 0.3 & 88.5 $\pm$ 0.3 & 81.4 $\pm$ 0.2\\
    \hline
    & AdaCorr & 99.5 $\pm$ 0.0 & 99.4 $\pm$ 0.0 & 99.1 $\pm$ 0.0 & 97.7 $\pm$ 0.2 & 99.5 $\pm$ 0.0 & 99.6 $\pm$ 0.0 & 99.4 $\pm$ 0.0\\
    \hline
    \hline
    
    \multirow{10}{*}{CIFAR10\vspace{5mm}} 
    & Standard & 87.5 $\pm$ 0.2 & 83.1 $\pm$ 0.4 & 76.4 $\pm$ 0.4 & 47.6 $\pm$ 2.0 & 88.8 $\pm$ 0.2 & 88.4 $\pm$ 0.3 & 84.5 $\pm$ 0.3  \\
    & Forgetting & 87.1 $\pm$0.2 & 83.4 $\pm$ 0.2 & 76.5 $\pm$ 0.7 & 33.0 $\pm$ 1.6 & 89.6 $\pm$ 0.1 & 83.7 $\pm$ 0.1 & 86.4 $\pm$ 0.5 \\
    & Forward & 87.4 $\pm$ 0.8 & 83.1 $\pm$ 0.8 & 74.7 $\pm$ 1.7 & 38.3 $\pm$ 3.0 & 89.0 $\pm$ 0.5 & 87.4 $\pm$ 1.1 & 84.7 $\pm$ 0.5\\
    & Decouple & 87.6 $\pm$ 0.4 & 84.2 $\pm$ 0.5 &  77.6 $\pm$ 0.1 & 48.5 $\pm$ 0.9 & 90.6 $\pm$ 0.3 & 89.1 $\pm$ 0.3 & 86.3 $\pm$ 0.5\\
    & MentorNet & 90.3 $\pm$ 0.3 & 83.2 $\pm$ 0.5 & 75.5 $\pm$ 0.7 & 34.1 $\pm$ 2.5 & 90.4 $\pm$ 0.2 & 88.9 $\pm$ 0.1 & 83.3 $\pm$ 1.0\\
    & Coteach & 90.1 $\pm$ 0.4 & 87.3 $\pm$ 0.5 & 80.9 $\pm$ 0.5 & 25.0 $\pm$ 3.6 & 91.8 $\pm$ 0.1 & 89.9 $\pm$ 0.2 & 80.1 $\pm$ 0.7\\
    & Abstention & 85.3 $\pm$ 0.4 & 82.0 $\pm$ 0.7 & 68.8 $\pm$ 0.4 & 33.8 $\pm$ 7.7 & 88.5 $\pm$ 0.0 & 83.1 $\pm$ 0.5 & 77.4 $\pm$ 0.4\\
    \hline
    & AdaCorr & \textbf{91.0 $\pm$ 0.3} & \textbf{88.7 $\pm $ 0.5} & \textbf{81.2 $\pm$ 0.4} & \textbf{49.2 $\pm$ 2.4} 
    & \textbf{92.2 $\pm$ 0.1} & \textbf{91.3 $\pm$ 0.3} & \textbf{89.2 $\pm $ 0.4} \\
    \hline
    \hline
    
    \multirow{10}{*}{CIFAR100\vspace{5mm}}
    & Standard & 58.9 $\pm$ 0.8 & 52.1 $\pm$ 1.0 & 42.1 $\pm$ 0.7 & 20.8 $\pm$ 1.0 & 59.5 $\pm$ 0.4 & 52.9 $\pm$ 0.6 & 44.7 $\pm$ 1.3 \\
    & Forgetting & 59.3 $\pm$ 0.8 & 53.0 $\pm$ 0.2 & 40.9 $\pm$ 0.5 & ~~7.7 $\pm$ 1.1 & 61.4 $\pm$ 0.9 & 54.6 $\pm$ 0.6 & 37.7 $\pm$ 4.6 \\
    & Forward & 58.4 $\pm$ 0.5 & 52.2 $\pm$ 0.3 & 41.1 $\pm$ 0.5 & 20.6 $\pm$ 0.6 & 58.3 $\pm$ 0.7 & 53.2 $\pm$ 0.6 & 44.4 $\pm$ 2.8\\
    & Decouple & 59.0 $\pm$ 0.7 & 52.2 $\pm$ 0.7 & 40.2 $\pm$ 0.4 & 18.5 $\pm$ 0.8 & 60.8 $\pm$ 0.7 &  56.1 $\pm$ 0.7 & 48.4 $\pm$ 1.0\\
    & MentorNet & 63.6 $\pm$ 0.5 & 51.4 $\pm$ 1.4 & 38.7 $\pm$ 0.8 & 17.4 $\pm$ 0.9 & 64.7 $\pm$ 0.2 & 57.4 $\pm$ 0.8 & 47.4 $\pm$ 1.7\\
    & Coteach & 66.1 $\pm$ 0.5 & 60.0 $\pm$ 0.6 & \textbf{48.3 $\pm$ 0.1} & 16.1 $\pm$ 1.1 & 63.4 $\pm$ 0.9 & 57.6 $\pm$ 0.3 & 49.2 $\pm$ 0.3\\
    & Abstention & \textbf{75.1$\pm$ 5.4} & 60.0 $\pm$ 0.8 & 51.1$\pm$ 0.8 & 10.3 $\pm$ 0.5 & 65.4 $\pm$ 0.5 & 56.8 $\pm$ 0.5 & 47.3 $\pm$ 0.3\\
    \hline
    
    & AdaCorr & 67.8 $\pm$ 0.1 & \textbf{60.2 $\pm$ 0.8} & 46.5 $\pm $ 1.2 & \textbf{24.6 $\pm$ 1.1} &
    \textbf{68.3 $\pm$ 0.2} & \textbf{61.1 $\pm$ 0.5} & \textbf{49.8 $\pm$ 0.7}\\
    \hline
    \hline
    
    \multirow{10}{*}{ModelNet40\vspace{5mm}} 
    & Standard & 79.1 $\pm$ 2.6 & 75.3 $\pm$ 3.3& 70.0 $\pm$ 3.0 & 57.9 $\pm$ 2.3 & 84.4 $\pm$ 1.2 & 82.3 $\pm$ 1.3 & 78.9 $\pm$ 0.7 \\
    & Forgetting & 80.1 $\pm$ 1.8 & 73.9 $\pm$ 0.6 & 69.0 $\pm$ 0.7 & 26.2 $\pm$ 4.8 & 83.3 $\pm$ 1.1 & 62.0 $\pm$ 3.0 & 59.5 $\pm$ 2.9 \\
    & Forward & 52.3 $\pm$ 5.1 & 49.4 $\pm$ 6.8 & 43.5 $\pm$ 5.2 & 28.2 $\pm$ 5.5 & 48.1 $\pm$ 6.8 & 48.0 $\pm$ 3.7 & 49.1 $\pm$ 4.4\\
    & Decouple & 82.5 $\pm$ 2.2 & 80.7 $\pm$ 0.7 & 72.9 $\pm$ 1.0 & 55.4 $\pm$ 2.7 & 85.7 $\pm$ 1.4 & 84.3 $\pm$ 1.0 & 80.5 $\pm$ 2.4\\
    & MentorNet & 86.5 $\pm$ 0.5 & 75.4 $\pm$ 1.8 & 70.9 $\pm$ 1.9 & 52.7 $\pm$ 3.1 & 83.7 $\pm$ 1.8 & 81.0 $\pm$ 1.5 & 79.3 $\pm$ 2.1 \\
    & Coteach & 85.6 $\pm$ 0.9 & 84.2 $\pm$ 0.8 & \textbf{81.8 $\pm$ 1.1} & 68.9 $\pm$ 2.8 & 85.7 $\pm$ 0.8 & 79.1 $\pm$ 3.0 & 69.1 $\pm$ 2.4\\
    & Abstention & 78.1 $\pm$ 0.6 & 65.6 $\pm$ 0.5 & 45.6 $\pm$ 1.5 & 23.5 $\pm$ 0.5 & 82.3 $\pm$ 0.5 & 80.4 $\pm$ 0.6 & 65.6 $\pm$ 0.5 \\
    \hline
    & AdaCorr & \textbf{86.9 $\pm$ 0.3} & \textbf{85.1 $\pm$ 0.6} & 78.6 $\pm$ 1.4 & \textbf{72.1 $\pm$ 1.1} &\textbf{87.6 $\pm$ 0.4} & \textbf{84.6 $\pm$ 0.5} & \textbf{83.7 $\pm$ 0.5} \\
    \hline
\end{tabular}
\label{tab_results}
\end{table*}

\section{Conclusion}
We prove theoretical guarantees for data-re-calibrating methods for noisy labels.
Based on the result, we propose a label correction algorithm to combat label noise. Our method can produce models robust to different noise patterns. Experiments on various datasets show that our method outperforms many recently proposed methods.

\newpage
\section*{Acknowledgements}

Mayank Goswami is supported by National Science Foundation grants CRII-1755791 and CCF-1910873. The research of Songzhu Zheng and Chao Chen is partially supported by NSF IIS-1855759, CCF-1855760 and IIS-1909038. The research of Pengxiang Wu and Dimitris Metaxas is partially supported by NSF CCF-1733843. We thank anonymous referees for constructive comments and suggestions.

\bibliographystyle{icml2020}
\bibliography{reference}

\end{document}


\onecolumn 

\icmltitle{Error-Bounded Correction of Noisy Labels\\
--- Supplementary Material --- }



\icmlsetsymbol{equal}{*}

\begin{icmlauthorlist}
\icmlauthor{Songzhu Zheng}{sbu-ams}
\icmlauthor{Pengxiang Wu}{rutgers}
\icmlauthor{Aman Goswami}{bain}
\icmlauthor{Mayank Goswami}{cuny}
\icmlauthor{Dimitris Metaxas}{rutgers}
\icmlauthor{Chao Chen}{sbu-bmi}
\end{icmlauthorlist}

\icmlaffiliation{sbu-ams}{Department of Applied Mathematics and Statistics, Stony Brook University, NY, USA}
\icmlaffiliation{sbu-bmi}{Department of Biomedical Informatics, Stony Brook University, NY, USA}
\icmlaffiliation{bain}{Bain \& Company, Bangalore, India.}
\icmlaffiliation{rutgers}{Department of Computer Science, Rutgers University, NJ, USA}
\icmlaffiliation{cuny}{Department of Computer Science, City University of New York, NY, USA}

\icmlcorrespondingauthor{Songzhu Zheng}{zheng.songzhu@stonybrook.edu}

\printAffiliationsAndNotice{}  

\section{Additional (Synthetic) Experiment for Validation of the Bound}

In Section 2.3 of the submitted manuscript, we used the output of deep neural networks f as an approximation of $\eta$ on the  CIFAR10 dataset. We provided empirical estimates of the constants $C$ and $\lambda$ in the Tsybakov condition for $\eta$, as well as estimates of the probability $\pr[\ynoise = h^{\ast}(\vx), f_{\ynoise}(\vx)<\Delta]$. 

In this section, we provide additional experiments on a \textit{synthetic data set} generated using a mixture-of-Gaussians distribution. In this ideal setting, we know $\eta$, $\tau_{01}$, $\tau_{10}$, $\etanoise$ \textit{exactly}. We can a) use $\etanoise$ as the classifier and b) evaluate the constants in Tsybakov condition for $\eta$ in order to evaluate the upper bound in Theorem 1. 

\textbf{Estimation of Tsybakov condition constants.} We let $\pr(\vx)$ be a mixture of Gaussian distribution in a 10 dimensional feature space, $\vx \sim \frac{1}{2}\mathcal{N}(0, I_{10\times10}) + \frac{1}{2}\mathcal{N}(1, I_{10\times10})$. We sample from the two components with equal probability. If $\vx$ comes from component $\mathcal{N}(0, I_{10\times10})$, it is given label 0. Otherwise, if $\vx$ comes from component $\mathcal{N}(1, I_{10\times10})$, it is given label 1. 
The true conditional distribution is  $\eta(\vx) = \frac{\exp\left\{-\frac{1}{2}\lvert\lvert\vx-1\rvert\rvert^2\right\}}{\exp\left\{-\frac{1}{2}\lvert\lvert\vx\rvert\rvert^2\right\} + \exp\left\{-\frac{1}{2}\lvert\lvert\vx-1\rvert\rvert^2\right\}}$. 

Following the idea of our experiment on CIFAR10 in the manuscript (Section 2.4), we estimate $\pr\left[\lvert \eta(\vx) - \frac{1}{2} \rvert \leq t \right]$ for values of $t$ sampled between 0 and 0.9 using the empirical frequency $p_t=\frac{1}{n}\sum_{i=1}^n \mathbf{1}_{\{\lvert \eta(\vx) - 1/2 \rvert \leq t\}}(\vx)$. Note that if the Tsybakov condition is tight,  $\log(p_t)$ approximates $\log(Ct^\lambda)$. The samples for $\log(t)$ and correspondingly, $\log(Ct^\lambda) \approx \log(p_t)$ are drawn as blue dots in Figure \ref{fig:supp:fig1}(a). The ordinary least square (OLS) linear regression results is drawn as a red line. We found the estimated values of $C$ and $\lambda$ to be 0.58 and 1.27 respectively. The estimation is high is confidence: the determinant coefficient $R^2$ equals $0.904$, and we have a p-value which is less than $10^{-4}$. 

\textbf{Estimation of the error bound, and its tightness.}
We also introduce label noise using predefined transition probability $\tau_{01}$ and $\tau_{10}$. We can estimate $C$ and $\lambda$ as mentioned above, and know $\tau_{01}, \tau_{10}, \eta(x)$, and thus, $\widetilde{\eta}(\vx)$. Therefore we can evaluate the error bound in Theorem 1. We plot the error bound as a function of $\epsilon$ in Figures \ref{fig:supp:fig1}(b) and (c) (drawn green curves). 

Finally, we assume a perfect noisy classifier $f=\etanoise$. In other words, $\epsilon = 0$. We empirically show that when $f(\vx)<\Delta$, the probability of $\ynoise$ being correct (i.e., $\ynoise = h^{\ast}(\vx)$) is zero (blue lines in Figures \ref{fig:supp:fig1}(b) and (c)). 

\begin{figure*}[!htbp]
\begin{tabular}{ccc}
\includegraphics[width=0.33\textwidth]{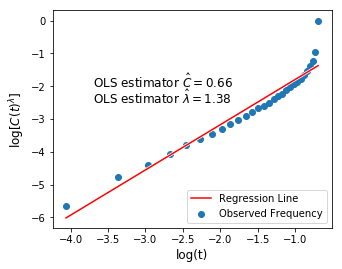}&
\includegraphics[width=0.3\textwidth]{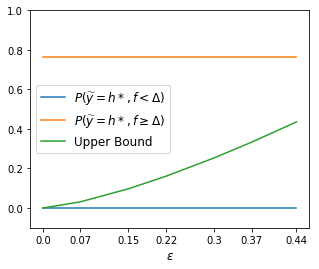} &
\includegraphics[width=0.3\textwidth]{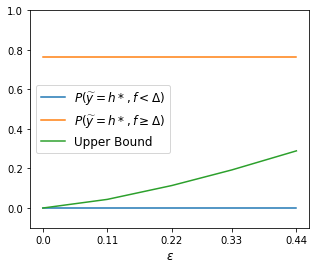} \\
(a) & (b) & (c)\\
\includegraphics[width=0.3\textwidth]{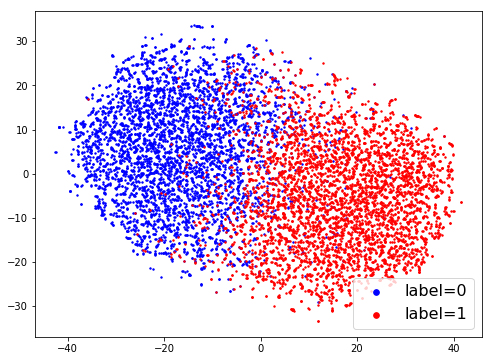} &
\includegraphics[width=0.3\textwidth]{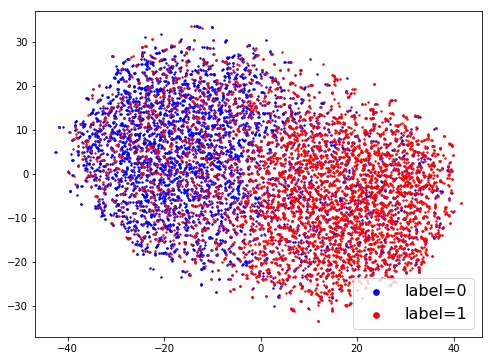} &
\includegraphics[width=0.3\textwidth]{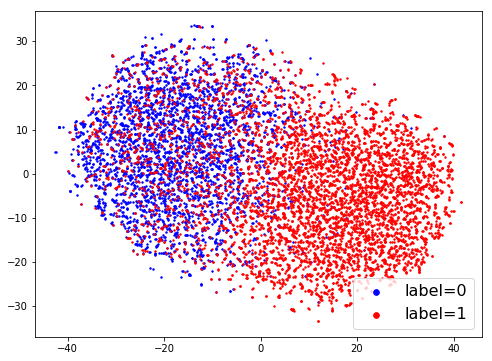} \\
(d) & (e) & (f)
\end{tabular}
\caption{Synthetic experiment using Mixture of Gaussian at noise level 20\%. (a): Check of Tsybakov condition using linear regression, where y-axis is the proportion of data points at distance t from decision boundary. (b): Proportion of labels that are not correct (not consistent with Bayes optimal decision rule) and the proposed upper bound. (c): Same as (b) but labels are corrupted with aysmmetric noise. (d): t-SNE of the clean data. (e): t-SNE of the data with symmetric noise. (f): t-SNE of the data with asymmetric noise.}
\label{fig:supp:fig1}
\end{figure*}

\textbf{Validation of the label-correction algorithm.} To the same synthetic dataset, we also apply our LRT-Correction algorithm and validate the bound in Corollary 1. Since we know $\widetilde{\eta}(\vx)$, $\tau_{01}$ and $\tau_{10}$, we calculate the correction error bound of Corollary 1 in closed form. We draw the bound w.r.t.~the error $\epsilon$ in orange curves in Figure \ref{fig:supp:fig2}. Finally, we run our label correction algorithm using the perfect noisy classifier $f=\etanoise$ and validate that the corrected labels are very close to clean (the success rate is limited by the asymmetry level of the noise pattern). See blue lines in Figure \ref{fig:supp:fig2}.

\begin{figure*}[!htbp]
\label{syn:norm}
\centering
\begin{tabular}{cc}
\includegraphics[width=0.3\textwidth]{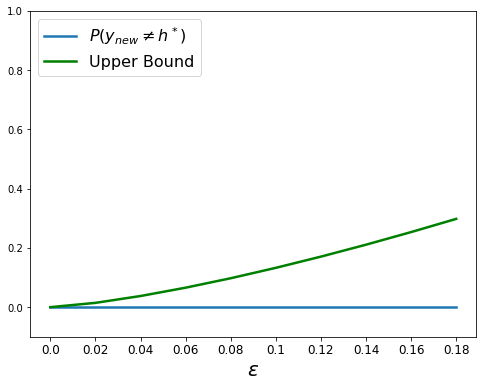} &
\includegraphics[width=0.3\textwidth]{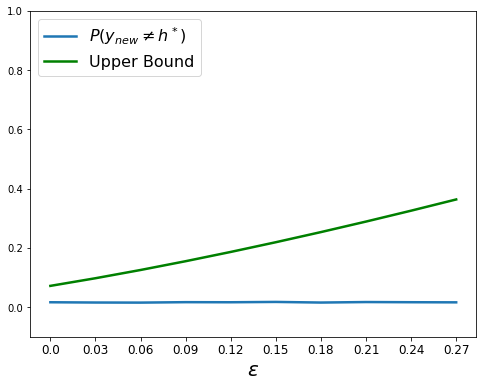}\\ 
(a) & (b)\\
\includegraphics[width=0.3\textwidth]{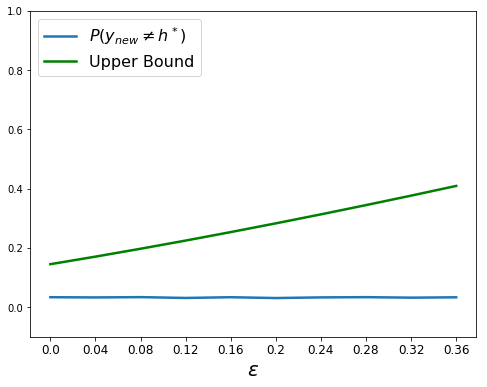} &
\includegraphics[width=0.3\textwidth]{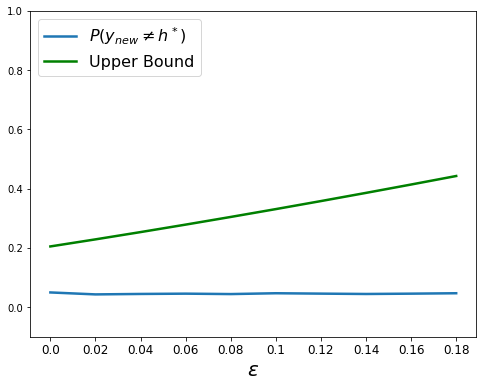}\\ 
(c) & (d)\\
\end{tabular}
\caption{Performance of LRT algorithm given $\widetilde{\eta}(\vx)$ v.s the proposed upper bound. 
(a): Symmetric noise ($\tau_{10}=\tau_{01}=0.3$). (b): Asymmetric noise ($\tau_{10}=0.2, \tau_{01}=0.3$). (c): Asymmetric noise ($\tau_{10}=0.1, \tau_{01}=0.3$). (d): Asymmetric noise ($\tau_{10}=0.3, \tau_{01}=0$)}
\label{fig:supp:fig2}
\end{figure*}

\begin{figure*}
\label{syn:lrtcorrect}
\centering
\begin{tabular}{ccc}
     \includegraphics[width=0.3\textwidth]{figs/synthetic_experiment/tsne_clean.png} &
     \includegraphics[width=0.3\textwidth]{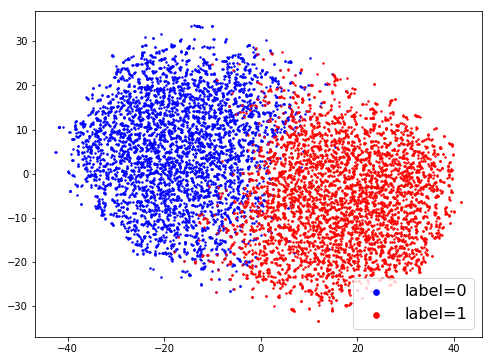} &
     \includegraphics[width=0.3\textwidth]{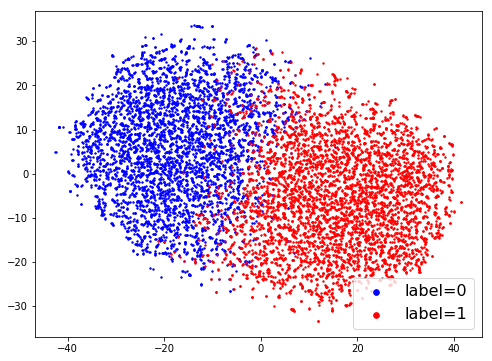}\\
     (a) & (b) & (c)
\end{tabular}
\caption{Label Correction Result Using LRT-Correct. (a): Clean data as it in Fig 1d.
(b): Labels after correction for data in Fig 1e. (c): Labels after correction for data in Fig 1f.}
\end{figure*}

\newpage




 












\section{Proof of Theorem 2}



Define $m_\vx := \argmax\limits_i f_i(\vx)$, $u_\vx := \argmax\limits_i \eta_i(\vx)$ and $s_\vx := \argmax\limits_{i\neq u_\vx} \eta_i(\vx)$. Let $[Nc] := \{1,2,\cdots, N_c\}$. Finally, define $\epsilon_{i}(\vx) := \left|f_i(\vx) - \widetilde{\eta}_i(\vx)\right|$ and $\epsilon := \max\limits_{\vx, i} \epsilon_i(\vx)$.

For multi-class scenario, we know $\forall i\in [N_c]$, $\widetilde{\eta}_i(\vx) = \sum\limits_{j\in [N_c]}\tau_{ji}\eta_j(\vx)$. We also restate the multi-class Tsybakov condition here:





\begin{assumption}[Multi-class Tsybakov Condition] 
$\exists C,\lambda >0$ and $t_0 \in (0,1]$ such that for all $t \leq t_0$,  
\[
    \pr\left[\left|\eta_{u_\vx}(\vx) - \eta_{s_\vx}(\vx)\right| \leq t\right] \leq Ct^\lambda
\]
\label{assume:multiclass}
\end{assumption}






\setcounter{theorem}{1}




\setcounter{theorem}{1}
\begin{theorem}
\label{thm:main2}
Assume $\eta(\vx)$ fulfills multi-class Tsybakov condition for constant $C, \lambda>0$ and $t_0 \in (0,1]$. Assume that $\epsilon \leq t_0\min\limits_i \tau_{i,i}$.
For $\Delta = \min\left[1, \min\limits_\vx[\tau_{\ynoise, \ynoise}\eta_{s_\vx}(\vx)+\sum\limits_{j\neq \ynoise}\tau_{j,\ynoise}\eta_j(\vx)]\right]$:
\[ {
\Pr_{(x,y)\sim D} \Big[\ynoise = h^*(\vx), f_{\ynoise}(\vx) < \Delta\Big] \leq C\left[O(\epsilon)\right]^\lambda 
}
\]
\end{theorem}
\vspace{-2em}
\begin{proof}






\begin{align}
\label{thm2:part1}
&\pr\left[\ynoise = h^*(\vx),  f_{\ynoise}(\vx) < \Delta \right] 
= \pr\left[\eta_{\ynoise}(\vx) \geq \eta_{s_\vx}(\vx), f_{\ynoise}(\vx) < \Delta \right] \nonumber \\
&\leq \pr\left[\eta_{\ynoise}(\vx) \geq \eta_{s_\vx}(\vx), \widetilde{\eta}_{\ynoise}(\vx) < \Delta + \epsilon_{\ynoise} \right] \nonumber \\
&\leq \pr\left[\eta_{\ynoise}(\vx) \geq \eta_{s_\vx}(\vx), \widetilde{\eta}_{\ynoise}(\vx) < \Delta + \epsilon \right] \nonumber \\
&= \pr\left[\eta_{\ynoise}(\vx) \geq \eta_{s_\vx}(\vx), \sum\limits_{j\in [N_c]}\tau_{j,\ynoise}\eta_{\ynoise}(\vx) < \Delta + \epsilon \right] \nonumber \\
&= \pr\left[\eta_{\ynoise}(\vx) \geq \eta_{s_\vx}(\vx), \eta_{\ynoise}(\vx) < \frac{\Delta - \sum\limits_{j\neq \ynoise}\tau_{j,\ynoise}\eta_{j}(\vx)+\epsilon}{\tau_{\ynoise, \ynoise}}\right] \nonumber \\
&= \pr\left[\eta_{s_\vx}(\vx) \leq \eta_{\ynoise}(\vx) < \frac{\Delta-\sum\limits_{j\neq \ynoise}\tau_{j,\ynoise}\eta_j(\vx)}{\tau_{\ynoise, \ynoise}}+\frac{\epsilon}{\tau_{\ynoise, \ynoise}}\right]
\end{align}


Remember that $\Delta = \min\left[1, \min\limits_\vx [\tau_{\ynoise, \ynoise}\eta_{s_\vx}(\vx)+\sum\limits_{j\neq \ynoise}\tau_{j,\ynoise}\eta_j(\vx)] \right] \leq \tau_{\ynoise, \ynoise}\eta_{s_\vx}(\vx)+\sum\limits_{j\neq \ynoise}\tau_{j,\ynoise}\eta_j(\vx)$. Then if we substitute $\Delta$ in (\ref{thm2:part1}) with $\tau_{\ynoise, \ynoise}\eta_{s_\vx}(\vx)+\sum\limits_{j\neq \ynoise}\tau_{j,\ynoise}\eta_j(\vx)$, continuing the derivation of (\ref{thm2:part1}), we will end up with:

\begin{align*}
&\pr\left[\ynoise = h^*(\vx), f_{\ynoise}(\vx) < \Delta \right] \\
&\leq \pr\left[\eta_{s_\vx}(\vx) \leq \eta_{\ynoise}(\vx)  < \frac{\Delta - \sum\limits_{j\neq \ynoise}\tau_{j,\ynoise}\eta_j(\vx) }{\tau_{\ynoise, \ynoise}} + \frac{\epsilon}{\tau_{\ynoise, \ynoise}}\right]\\
&\leq \pr\left[\eta_{s_\vx}(\vx) \leq \eta_{\ynoise}(\vx) < \eta_{s_\vx}(\vx) + \frac{\epsilon}{\tau_{\ynoise, \ynoise}}\right]
\leq C\left( \frac{\epsilon}{\tau_{\ynoise, \ynoise}} \right)^\lambda
\end{align*}

Notice that Tsybakov condition holds here because $\epsilon \leq t_0 \min\limits_i \tau_{i,i}$, which implies that $\frac{\epsilon}{\tau_{\ynoise, \ynoise}} \leq t_0$. This complete the proof for this case. 







\end{proof}


\section{Proof of Theorem~\ref{theorem3}}

\begin{lemma}
\label{lemma1}
(Algorithm Multiclass-Theorem Guarantee). Assume $\eta(\vx)$ fulfills multi-class Tsybakov condition for constant $C>0$, $\lambda>0$ and $t_0 \in (0,1]$. Assume that $\epsilon \leq t_0 \min\limits_i \tau_{ii}$. Let $\widetilde{y}_{new}$ denote the output of the \LRTCorr~with $\vx$, $\ynoise_\vx$, $f$, and the given $\delta$, then:
\begin{enumerate}
\item{Sensitivity Optimized Critical Value.} Let $\delta =  \min\limits_\vx \left[ \frac{\tau_{\ynoise, \ynoise}\eta_{s_\vx}(\vx)+\sum\limits_{j\neq \ynoise}\tau_{j,\ynoise}\eta_j(\vx)}{f_{m_\vx}(\vx)}\right]$ then :
\[
\Pr_{(x,y)\sim D} \left[ \ynoise_{new} \neq {h^*(\vx)} , \ynoise \text{ is rejected}\right] \leq C\left[O(\epsilon)\right]^\lambda + \pr\limits_{(\vx, y)\sim D}\left[u_\vx \neq m_\vx, u_\vx \neq \ynoise\right]
\]
\item{Specificity Optimized Critical Value.} Let $\delta = \max\limits_\vx \left[\frac{f_{\ynoise}(\vx)}{\tau_{m_\vx, m_\vx}\eta_{s_\vx}(\vx)+\sum\limits_{j\neq m_\vx}\tau_{j,m_\vx}\eta_j(\vx)}\right]$ then :
\[
\Pr_{(x,y)\sim D} \left[ \ynoise_{new} \neq h^*(\vx) , \ynoise \text{ is accepted} \right] \leq C\left[O(\epsilon)\right]^\lambda + \pr\limits_{(\vx, y)\sim D}\left[u_\vx \neq m_\vx, u_\vx \neq \ynoise\right]
\]
\end{enumerate}
\end{lemma}






\begin{proof}
First look at cases where $\ynoise$ is rejected. 
\begin{align}
\label{lemma:part1}
&\pr\left[\ynoise_{new} \neq h^*(\vx) , \ynoise \text{ is rejected}\right]  \nonumber \\
&= \pr\left[\ynoise_{new} \neq h^*(\vx), \frac{f_{\ynoise}(\vx)}{f_{m_\vx}(\vx)} < \delta\right] \nonumber \\
&= \pr\left[\ynoise_{new} = m_\vx \neq h^*(\vx) = \ynoise, \frac{f_{\ynoise}(\vx)}{f_{m_\vx}(\vx)} < \delta\right]+
\pr\left[\ynoise_{new} = m_\vx \neq h^*(\vx) = u_\vx, u_\vx \neq \ynoise, \frac{f_{\ynoise}(\vx)}{f_{m_\vx}(\vx)} < \delta\right] \nonumber\\
&\leq \pr\left[h^*(\vx) = \ynoise, \frac{f_{\ynoise}(\vx)}{f_{m_\vx}(\vx)} < \delta\right]+
\pr\left[\ynoise_{new} = m_\vx \neq h^*(\vx) = u_\vx, u_\vx \neq \ynoise, \frac{f_{\ynoise}(\vx)}{f_{m_\vx}(\vx)} < \delta\right]
\end{align}

For the first term in (\ref{lemma:part1}), we have:
\begin{align}
\label{lemma1:2.1.1}
&\pr\left[h^*(\vx) = \ynoise, \frac{f_{\ynoise}(\vx)}{f_{m_\vx}(\vx)} < \delta\right] 
= \pr\left[h^*(\vx) = \ynoise, {f_{\ynoise}(\vx)} < \delta{f_{m_\vx}(\vx)}\right] \nonumber \\
&\leq \pr\left[\eta_{\ynoise}(\vx) \geq \eta_{s_\vx}(\vx), \widetilde{\eta}_{\ynoise}(\vx) - \epsilon < \delta f_{m_\vx}(\vx) \right]  \nonumber \\
&\leq \pr\left[\eta_{s_\vx}(\vx) \leq \eta_{\ynoise}(\vx)  < \frac{\delta f_{m_\vx}(\vx) - \sum\limits_{j\neq \ynoise}\tau_{j,\ynoise}\eta_j(\vx)}{\tau_{\ynoise, \ynoise}} + \frac{\epsilon}{\tau_{\ynoise, \ynoise}} \right]
\end{align}


We substitute $\delta$ in (\ref{lemma1:2.1.1}) with $\frac{\tau_{\ynoise, \ynoise}\eta_{s_\vx}(\vx)+\sum\limits_{j\neq \ynoise}\tau_{j,\ynoise}\eta_j(\vx)}{f_{m_\vx}(\vx)}$ and continue the calculation:

\begin{align}
\label{lemma1:2.1}
&\pr\left[h^*(\vx) = \ynoise, \frac{f_{\ynoise}(\vx)}{f_{m_\vx}(\vx)} < \delta\right] \nonumber \\
&\leq \pr\left[\eta_{s_\vx}(\vx) \leq \eta_{\ynoise}(\vx)  < \frac{\delta f_{m_\vx}(\vx) - \sum\limits_{j\neq \ynoise}\tau_{j,\ynoise}\eta_j(\vx)}{\tau_{\ynoise, \ynoise}} + \frac{\epsilon}{\tau_{\ynoise, \ynoise}} \right] \nonumber \\
&\leq \pr\left[\eta_{s_\vx}(\vx) \leq \eta_{\ynoise}(\vx) \leq \eta_{s_\vx}(\vx) + \frac{\epsilon}{\tau_{\ynoise, \ynoise}}\right] \nonumber \\
&\leq C\left(\frac{\epsilon}{\tau_{\ynoise, \ynoise}}\right)^\lambda
\end{align}

In (\ref{lemma1:2.1}), the Tsybakov condition holds here because $\epsilon \leq t_0 \min\limits_{i} \tau_{ii} $, which implies $\frac{\epsilon}{\tau_{\ynoise, \ynoise}} \leq t_0$.

For the second term in (\ref{lemma:part1}), we have:
\begin{align}
\label{lemma1:2.2}
&\pr\left[\ynoise_{new} = m_\vx \neq h^*(\vx) = u_\vx, u_\vx \neq \ynoise, \frac{f_{\ynoise}(\vx)}{f_{m_\vx}(\vx)} < \delta\right] 
\leq \pr\left[u_\vx \neq m_\vx, u_\vx \neq \ynoise\right]
\end{align}






for which our algorithm currently doesn't have a good way to deal with and we will leave it as future research problem.

Finally, summarize every piece and we finished the proof for cases where $\ynoise$ is rejected:
\begin{align*}
&\pr\left[\widetilde{y}_{new} \neq h^*(\vx) , \ynoise \text{ is rejected}\right] \leq (\ref{lemma:part1}) \\
&\leq (\ref{lemma1:2.1}) + (\ref{lemma1:2.2}) \\
&\leq C\left[\frac{\epsilon}{\tau_{u_\vx, u_\vx}}\right]^\lambda + \pr\left[u_\vx \neq m_\vx, u_\vx \neq \ynoise\right]\\
&=C\left[O(\epsilon)\right]^\lambda + \pr\left[u_\vx \neq m_\vx, u_\vx \neq \ynoise\right]
\end{align*}

For cases where $\ynoise$ is accepted:
\begin{align}
\label{lemma1:accepte}
&\pr\left[\ynoise_{new} \neq h^*(\vx) , \ynoise \text{ is accepted} \right] 
=\pr\left[\ynoise_{new} \neq h^*(\vx) , \frac{f_{\ynoise}(\vx)}{f_{m_\vx}(\vx)} \geq \delta\right] \nonumber\\
&= \pr\left[\ynoise_{new} = \ynoise \neq h^*(\vx) = m_\vx, \frac{f_{\ynoise}(\vx)}{f_{m_\vx}(\vx)}\geq \delta\right] + 
\pr\left[\ynoise_{new} = \ynoise \neq h^*(\vx), m_\vx \neq h^*(\vx), \frac{f_{\ynoise}(\vx)}{f_{m_\vx}(\vx)}\geq \delta\right] \nonumber \\
&= \pr\left[\eta_{m_\vx}(\vx) \geq \eta_{s_\vx}(\vx), f_{m_\vx}(\vx) \leq f_{\ynoise}(\vx)/\delta\right] + 
\pr\left[u_\vx \neq m_\vx, u_\vx \neq \ynoise\right]
\end{align}

For the first term in (\ref{lemma1:accepte}), we have:
\begin{align}
\label{lemma1:accepte:1}
&\pr\left[\eta_{m_\vx}(\vx) \geq \eta_{s_\vx}(\vx), f_{m_\vx}(\vx) \leq f_{\ynoise}(\vx)/\delta\right] \leq 
\pr\left[\eta_{m_\vx}(\vx) \geq \eta_{s_\vx}(\vx), \widetilde{\eta}_{m_\vx}(\vx) - \epsilon \leq f_{\ynoise}(\vx)/\delta\right] \nonumber\\
&= \pr\left[\eta_{s_\vx}(\vx) \leq \eta_{m_\vx}(\vx) \leq \frac{f_{\ynoise}(\vx)/\delta -\sum\limits_{j\neq m_\vx}\tau_{j,m_\vx}\eta_j(\vx)}{\tau_{m_\vx, m_\vx}} + \frac{\epsilon}{\tau_{m_\vx, m_\vx}}\right]
\end{align}

Firstly, observe that if $\delta > 1$, then $\pr\left[\ynoise_{new} = \ynoise \neq h^*(\vx), \frac{f_{\ynoise}(\vx)}{f_{m_\vx}(\vx)} \geq \delta\right] = 0$ due to the definition of $m_\vx$. 

Then notice that $\delta = \max\limits_\vx \frac{f_{\ynoise}(\vx)}{\tau_{m_\vx, m_\vx}\eta_{s_\vx}(\vx)+\sum\limits_{j\neq m_\vx}\tau_{j,m_\vx}\eta_j(\vx)} \geq \frac{f_{\ynoise}(\vx)}{\tau_{m_\vx, m_\vx}\eta_{s_\vx}(\vx)+\sum\limits_{j\neq m_\vx}\tau_{j,m_\vx}\eta_j(\vx)}$. If we substitute $\delta$ in (\ref{lemma1:accepte:1}) with $\frac{f_{\ynoise}(\vx)}{\tau_{m_\vx, m_\vx}\eta_{s_\vx}(\vx)+\sum\limits_{j\neq m_\vx}\tau_{j,m_\vx}\eta_j(\vx)}$ and continuing the calculation, we will have:
\begin{align}
\label{lemma1:accepte:1.1}
&\pr\left[\eta_{m_\vx}(\vx) \geq \eta_{s_\vx}(\vx), f_{m_\vx}(\vx) \leq f_{\ynoise}(\vx)/\delta\right]  \nonumber\\
&\leq \pr\left[\eta_{s_\vx}(\vx) \leq \eta_{m_\vx}(\vx) \leq \frac{f_{\ynoise}(\vx)/\delta -\sum\limits_{j\neq m_\vx}\tau_{j,m_\vx}\eta_j(\vx)}{\tau_{m_\vx, m_\vx}} + \frac{\epsilon}{\tau_{m_\vx, m_\vx}}\right] \nonumber\\
&\leq \pr\left[\eta_{s_\vx}(\vx) \leq \eta_{m_\vx}(\vx) \leq \eta_{s_\vx}(\vx) + \frac{\epsilon}{\tau_{m_\vx, m_\vx}}\right] \nonumber\\
&\leq C\left[\frac{\epsilon}{\tau_{m_\vx, m_\vx}}\right]^\lambda
\end{align}

For the second term in (\ref{lemma1:accepte}), our algorithm cannot deal with it properly. We will leave it as the future research problem.




Now we summarize all pieces and we get:

\begin{align*}
&\pr\left[\ynoise_{new}\neq h^*(\vx), \ynoise \text{ is accepted}\right] 
= (\ref{lemma1:accepte}) \\
&\leq (\ref{lemma1:accepte:1.1}) + \pr\left[u_\vx \neq m_\vx, u_\vx \neq \ynoise\right] \\
&\leq C\left[\frac{\epsilon}{\tau_{u_\vx, u_\vx}}\right]^\lambda + \pr\left[u_\vx \neq m_\vx, u_\vx \neq \ynoise\right]\\
\end{align*}
which compete the proof for cases that are accepted.
\end{proof}

We give following several facts based on our theorem:
\begin{enumerate}
    \item For binary case, if we set $\delta = \frac{1-|\tau_{10}-\tau_{01}|}{1+|\tau_{10}-\tau_{01}|}$ and further assume $\epsilon \leq t_0(1-\tau_{10}-\tau_{01})-\frac{|\tau_{10}-\tau_{01}|}{2}$, we have:
    \[
        \pr_{(\vx,y)\sim D}\left[\ynoise_{new} \neq h^*(\vx) \right] \leq C\left[\left|\frac{\tau_{10}-\tau_{01}}{2(1-\tau_{10}-\tau_{01})}\right| + \frac{\epsilon}{1-\tau_{10}-\tau_{01}}\right]^\lambda
    \]
    \begin{proof}
    For binary case,  we have:
    \begin{align}
    \label{lemma1:binary}
    &\pr_{(\vx,y)\sim D}\left[\ynoise_{new} \neq h^*(\vx) \right] 
    = \pr_{(\vx,y)\sim D}\left[\ynoise_{new} \neq h^*(\vx), \ynoise \text{ is rejected}\right] + \pr_{(\vx,y)\sim D}\left[\ynoise_{new} \neq h^*(\vx), \ynoise \text{ is accepted}\right] \nonumber\\
    &= \pr\left[\eta_{\ynoise}(\vx) > \frac{1}{2}, \frac{f_{\ynoise}(\vx)}{f_{m_\vx}(\vx)} < \delta \right] + \pr\left[\eta_{\ynoise}(\vx) \leq \frac{1}{2}, \frac{f_{\ynoise}(\vx)}{f_{m_\vx}(\vx)} \geq \delta \right] \nonumber\\
    &\leq \pr\left[\eta_{\ynoise}(\vx) > \frac{1}{2}, \frac{f_{\ynoise}(\vx)}{1-f_{\ynoise}(\vx)} < \delta \right] + \pr\left[\eta_{\ynoise}(\vx) \leq \frac{1}{2}, \frac{f_{\ynoise}(\vx)}{1-f_{\ynoise}(\vx)} \geq \delta \right] \nonumber\\
    &\leq \pr\left[\eta_{\ynoise}(\vx) > \frac{1}{2}, \widetilde{\eta}_{\ynoise}(\vx) < \frac{\delta}{1+\delta} + \epsilon \right] + \pr\left[\eta_{\ynoise}(\vx) \leq \frac{1}{2}, \widetilde{\eta}_{\ynoise}(\vx) \geq \frac{\delta}{1+\delta} - \epsilon \right] \nonumber\\
    &= \pr\left[\frac{1}{2} < \eta_{\ynoise}(\vx) < \frac{\frac{\delta}{1+\delta}-\tau_{1-\ynoise, \ynoise}}{1-\tau_{10}-\tau_{01}}  + \frac{\epsilon}{1-\tau_{10}-\tau_{01}} \right] + \pr\left[ \frac{\frac{\delta}{1+\delta}-\tau_{1-\ynoise, \ynoise}}{1-\tau_{10}-\tau_{01}}  - \frac{\epsilon}{1-\tau_{10}-\tau_{01}}  \leq \eta_{\ynoise}(\vx) \leq \frac{1}{2} \right]
    \end{align}
    
    Observe that $\delta = \frac{1-|\tau_{10}-\tau_{01}|}{1+|\tau_{10}-\tau_{01}|} \leq \frac{1-\tau_{\ynoise, 1-\ynoise}+\tau_{1-\ynoise, \ynoise}}{1+\tau_{\ynoise, 1-\ynoise}-\tau_{1-\ynoise, \ynoise}}$. We also have $\frac{\delta}{1+\delta} = \frac{1-|\tau_{10}-\tau_{01}|}{2} \leq \frac{1}{2}$. Now we substitute $\delta = \frac{1 - \tau_{\ynoise, 1-\ynoise} + \tau_{1-\ynoise, \ynoise}}{1 + \tau_{\ynoise, 1-\ynoise} - \tau_{1-\ynoise, \ynoise}}$ in the first term of (\ref{lemma1:binary}) and substitute $\frac{\delta}{1+\delta}$ with $\frac{1}{2}$ in the second term of (\ref{lemma1:binary}), by algebra we know that :
    \begin{align*}
    &\pr_{(\vx,y)\sim D}\left[\ynoise_{new} \neq h^*(\vx) \right] 
    = \pr_{(\vx,y)\sim D}\left[\ynoise_{new} \neq h^*(\vx), \ynoise \text{ is rejected}\right] + \pr_{(\vx,y)\sim D}\left[\ynoise_{new} \neq h^*(\vx), \ynoise \text{ is accepted}\right]\\
    &\leq \pr\left[\frac{1}{2} < \eta_{\ynoise}(\vx) < \frac{1}{2} + \frac{\epsilon}{1-\tau_{10}-\tau_{01}}\right] + \pr\left[\frac{1/2-\max(\tau_{10},\tau_{01})}{1-\tau_{10}-\tau_{01}} - \frac{\epsilon}{1-\tau_{10}-\tau_{01}} \leq \eta_{\ynoise}(\vx) \leq \frac{1}{2}\right]\\
    & \leq C\left[\left|\frac{\tau_{10}-\tau_{01}}{2(1-\tau_{10}-\tau_{01})}\right| + \frac{\epsilon}{1-\tau_{10}-\tau_{01}}\right]^\lambda
    \end{align*}
    Tsybakov assumption holds because $\frac{\epsilon}{1-\tau_{10}-\tau_{01}} + \frac{|\tau_{10}-\tau_{01}|}{2(1-\tau_{10}-\tau_{01})} \leq \frac{t_0 (1-\tau_{10}-\tau_{01})-\frac{|\tau_{10}-\tau_{01}|}{2}}{1-\tau_{10}-\tau_{01}} + \frac{|\tau_{10}-\tau_{01}|}{2(1-\tau_{10}-\tau_{01})} \leq t_0$. 
    \end{proof}
    
    \item For symmetric noise $\tau_{ij}=\tau_{ji} = \tau, \forall i,j \in [N_c]$ and further assume (besides the assumption we made in Lemma~\ref{lemma1}) $\epsilon \leq \frac{1}{2}\min\limits_{\vx}\left[\widetilde{\eta}_{u_\vx}(\vx) - \widetilde{\eta}_{s_\vx}(\vx)\right]$,  we have:
    \begin{enumerate}
    \item{Sensitivity Optimized Critical Value.} Let $\delta =  \min\limits_\vx \left[ \frac{\tau_{\ynoise, \ynoise}\eta_{s_\vx}(\vx)+\sum\limits_{j\neq \ynoise}\tau_{j,\ynoise}\eta_j(\vx)}{f_{m_\vx}(\vx)}\right]$ then :
    \[
    \Pr_{(x,y)\sim D} \left[ \ynoise_{new} \neq {h^*(\vx)} , \ynoise \text{ is rejected}\right] \leq C\left[O(\epsilon)\right]^\lambda 
    \]
    \item{Specificity Optimized Critical Value.} Let $\delta = \max\limits_{\vx} \left[ \frac{f_{\ynoise}(\vx)}{(\tau_{m_\vx, m_\vx} - \tau_{\ynoise, m_\vx})\eta_{s_\vx}(\vx) + \tau_{\ynoise, m_\vx}}\right]$ then :
    \[
    \Pr_{(x,y)\sim D} \left[ \ynoise_{new} \neq h^*(\vx) , \ynoise \text{ is accepted} \right] \leq C\left[O(\epsilon)\right]^\lambda 
    \]
    \end{enumerate}
    \begin{proof}
    Observe that under symmetric noise scenario, $\forall i \in [N_c]$, $\eta_{u_\vx}(\vx) \geq \eta_{i}(\vx)$ will implies that $\widetilde{\eta}_{u_\vx}(\vx) \geq \widetilde{\eta}_i(\vx)$, i.e. $h^*(\vx) = \widetilde{h}^*(\vx)$. To show this:
    \begin{align*}
    \eta_{u_\vx}(\vx) &\geq \eta_i(\vx) \\\Longleftrightarrow 
    [1-N_c\tau] \eta_{u_\vx}(\vx) &\geq [1-N_c \tau] \eta_{u_i}(\vx) \\ \Longleftrightarrow [1-(N_c-1)\tau] \eta_{u_\vx}(\vx) - \tau \eta_{u_\vx}(\vx) & \geq [1-(N_c-1)\tau] \eta_{i}(\vx) - \tau \eta_i(\vx)\\
    \Longleftrightarrow [1-(N_c-1)\tau] \eta_{u_\vx}(\vx) + \tau \eta_i(\vx)  & \geq [1-(N_c-1)\tau] \eta_{i}(\vx) + \tau \eta_{u_\vx}(\vx) \\
    \Longleftrightarrow [1-(N_c-1)\tau] \eta_{u_\vx}(\vx) + \tau \eta_i(\vx) + \tau \sum\limits_{j \neq u_\vx, j\neq i} \eta_j(\vx) & \geq [1-(N_c-1)\tau] \eta_{i}(\vx) + \tau \eta_{u_\vx}(\vx) + \tau \sum\limits_{j \neq u_\vx, j\neq i} \eta_j(\vx)\\
    \Longleftrightarrow [1-(N_c-1)\tau] \eta_{u_\vx}(\vx)  + \tau \sum\limits_{j \neq u_\vx} \eta_j(\vx) & \geq [1-(N_c-1)\tau] \eta_{i}(\vx) + \tau \sum\limits_{j\neq i} \eta_j(\vx)\\ \Longleftrightarrow
    \sum\limits_{j \in [Nc]} \tau_{j,u_\vx} \eta_j(\vx) &\geq \sum\limits_{j\in[N_c]}\tau_{ji}\eta_i(\vx)\\
    \Longleftrightarrow \widetilde{\eta}_{u_\vx}(\vx) &\geq \widetilde{\eta}_i(\vx)
    \end{align*}
    
    Since $\widetilde{\eta}_{u_\vx}(\vx) \geq \widetilde{\eta}_{s_\vx}(\vx) + 2\epsilon$, then $\widetilde{\eta}_{u_\vx}(\vx) - \epsilon \geq \widetilde{\eta}_i(\vx) + \epsilon$ and thus $f_{u_\vx} \geq f_i(\vx)$ $\forall i \in [N_c]$, which implies $f_{m_\vx}(\vx) = f_{u_\vx}(\vx)$. As a result, second term in (\ref{lemma:part1}) and second term in (\ref{lemma1:accepte}) will be 0.
    \end{proof}
    
\end{enumerate}














\begin{theorem}
\label{theorem3}
Assume $\eta$ and $f$ satisfy the same conditions as Lemma~\ref{lemma1}. Also assume $\xi < \delta$ and further assume that $\epsilon \leq \min\left(\frac{t_0 \delta^2 \min\limits_i\tau_{ii}-\xi^2-\xi}{\delta^2}, (t_0-\xi)\min\limits_i \tau_{ii}\right)$. Let $\ynoise_{new}$ be the output of the \LRTCorr~with $(\vx,\ynoise)$, $f$, and the approximate $\hat{\delta}$. Then:
\begin{enumerate}
\item{Sensitivity Optimized Critical Value.} Let $\delta =  \min\limits_\vx \left[ \frac{\tau_{\ynoise, \ynoise}\eta_{s_\vx}(\vx)+\sum\limits_{j\neq \ynoise}\tau_{j,\ynoise}\eta_j(\vx)}{f_{m_\vx}(\vx)}\right]$ then :
\[
\Pr_{(x,y)\sim D} \left[ \ynoise_{new} \neq {h^*(\vx)} , \ynoise \text{ is rejected}\right] \leq C\left[O(\max(\epsilon, \xi))\right]^\lambda + \pr\left[u_\vx \neq m_\vx, u_\vx \neq \ynoise\right]
\]
\item{Specificity Optimized Critical Value.} Let $\delta = \max\limits_\vx \frac{f_{\ynoise}(\vx)}{\tau_{m_\vx, m_\vx}\eta_{s_\vx}(\vx)+\sum\limits_{j\neq m_\vx}\tau_{j,m_\vx}\eta_j(\vx)}$ then :
\[
\Pr_{(x,y)\sim D} \left[ \ynoise_{new} \neq h^*(\vx) , \ynoise \text{ is accepted} \right] \leq C\left[O(\max(\epsilon, \xi))\right]^\lambda + \pr\left[u_\vx \neq m_\vx, u_\vx \neq \ynoise\right]
\]
\end{enumerate}
\end{theorem}

\begin{proof}
The proof will be similar to the proof of Lemma~\ref{lemma1}, but we need to adjust the error introduced by picking $\hat{\delta}$. Recall that $\xi$ and  $\epsilon $ are both less than one.

If we pick $\hat{\delta}$ instead of $\delta$, then for (\ref{lemma1:2.1.1}) in Lemma~\ref{lemma1}, we have:

\begin{align}
\label{theorem3:part1}
&\pr\left[h^*(\vx) = \ynoise, \frac{f_{\ynoise}(\vx)}{f_{m_\vx}(\vx)} < \hat{\delta}\right] 
= \pr\left[h^*(\vx) = \ynoise, {f_{\ynoise}(\vx)} < \hat{\delta}{f_{m_\vx}(\vx)}\right] \nonumber \\
&\leq \pr\left[\eta_{\ynoise}(\vx) \geq \eta_{s_\vx}(\vx), \widetilde{\eta}_{\ynoise}(\vx) - \epsilon < \hat{\delta} f_{m_\vx}(\vx) \right]  \nonumber \\
&\leq \pr\left[\eta_{s_\vx}(\vx) \leq \eta_{\ynoise}(\vx)  < \frac{\hat{\delta} f_{m_\vx}(\vx) - \sum\limits_{j\neq \ynoise}\tau_{j,\ynoise}\eta_j(\vx)}{\tau_{\ynoise, \ynoise}} + \frac{\epsilon}{\tau_{\ynoise, \ynoise}} \right] \nonumber\\
&\leq \pr\left[\eta_{s_\vx}(\vx) \leq \eta_{\ynoise}(\vx)  < \frac{(\delta+\xi) f_{m_\vx}(\vx) - \sum\limits_{j\neq \ynoise}\tau_{j,\ynoise}\eta_j(\vx)}{\tau_{\ynoise, \ynoise}} + \frac{\epsilon}{\tau_{\ynoise, \ynoise}} \right] \nonumber \\
&\leq \pr\left[\eta_{s_\vx}(\vx) \leq \eta_{\ynoise}(\vx)  < \frac{\delta f_{m_\vx}(\vx) - \sum\limits_{j\neq \ynoise}\tau_{j,\ynoise}\eta_j(\vx)}{\tau_{\ynoise, \ynoise}} + \frac{\epsilon+\xi}{\tau_{\ynoise, \ynoise}} \right] \nonumber\\
&\leq C\left[\frac{\epsilon+\xi}{\tau_{\ynoise, \ynoise}}\right]^\lambda
\end{align}

The same upper bound holds for (\ref{lemma1:2.2}) with the same reason. Then:
\begin{align*}
&\pr\left[\widetilde{y}_{new} \neq h^*(\vx) , \ynoise \text{ is rejected}\right] \leq (\ref{theorem3:part1}) + \pr\left[u_\vx \neq m_\vx, u_\vx \neq \ynoise\right]\\
&= C\left[O(\max(\epsilon, \xi))\right]^\lambda + \pr\left[u_\vx \neq m_\vx, u_\vx \neq \ynoise\right]
\end{align*}


We next analyze (\ref{lemma1:accepte:1}) in Lemma~\ref{lemma1}:

\begin{align}
&\pr\left[\eta_{m_\vx}(\vx) \geq \eta_{s_\vx}(\vx), f_{m_\vx}(\vx) \leq f_{\ynoise}(\vx)/\hat{\delta}\right] \leq 
\pr\left[\eta_{m_\vx}(\vx) \geq \eta_{s_\vx}(\vx), \widetilde{\eta}_{m_\vx}(\vx) - \epsilon \leq f_{\ynoise}(\vx)/\hat{\delta}\right] \nonumber\\
&= \pr\left[\eta_{s_\vx}(\vx) \leq \eta_{m_\vx}(\vx) \leq \frac{f_{\ynoise}(\vx)/\hat{\delta} -\sum\limits_{j\neq m_\vx}\tau_{j,m_\vx}\eta_j(\vx)}{\tau_{m_\vx, m_\vx}} + \frac{\epsilon}{\tau_{m_\vx, m_\vx}}\right] \nonumber\\
&\leq \pr\left[\eta_{s_\vx}(\vx) \leq \eta_{m_\vx}(\vx) \leq \frac{f_{\ynoise}(\vx)/(\delta-\xi) - \sum\limits_{j\neq m_\vx}\tau_{j,m_\vx}\eta_j(\vx)}{\tau_{m_\vx, m_\vx}} + \frac{\epsilon}{\tau_{m_\vx, m_\vx}} \right] \nonumber\\
&=\pr\left[0 < \eta_{m_\vx}(\vx) - \eta_{s_\vx}(\vx) < \frac{f_{\ynoise}(\vx )/\delta - \sum\limits_{j\neq m_\vx}\tau_{j,m_\vx}\eta_j(\vx)}{\tau_{m_\vx, m_\vx}} - \eta_{s_\vx}(\vx) + \frac{\epsilon}{\tau_{m_\vx, m_\vx}} + \frac{\frac{\xi f_{\ynoise}(\vx)}{\delta(\delta-\xi)}}{\tau_{m_\vx, m_\vx}}\right] \nonumber
\end{align}

Observe that $\frac{\xi}{\delta(\delta-\xi)} = \frac{\delta}{(\delta-\xi)}\frac{\xi}{\delta^2} = \left[1 + O(\xi)\right]\frac{\xi}{\delta^2}$, where second equality comes from Taylor expansion. Then we substitute the $\delta$ as what we did in Lemma~\ref{lemma1} and continue the calculation:

\begin{align}
\label{theorem3:part2}
&\pr\left[\eta_{m_\vx}(\vx) \geq \eta_{s_\vx}(\vx), f_{m_\vx}(\vx) \leq f_{\ynoise}(\vx)/\hat{\delta}\right] \nonumber \\
&\leq \pr\left[0 < \eta_{m_\vx}(\vx) - \eta_{s_\vx}(\vx) < \frac{f_{\ynoise}(\vx )/\delta - \sum\limits_{j\neq m_\vx}\tau_{j,m_\vx}\eta_j(\vx)}{\tau_{m_\vx, m_\vx}} - \eta_{s_\vx}(\vx) + \frac{\epsilon}{\tau_{m_\vx, m_\vx}} + \frac{\frac{\xi f_{\ynoise}(\vx)}{\delta(\delta-\xi)}}{\tau_{m_\vx, m_\vx}}\right] \nonumber\\
&\leq \pr\left[0 \leq \eta_{m_\vx}(\vx) - \eta_{s_\vx}(\vx) \leq  \frac{\epsilon}{\tau_{m_\vx, m_\vx}} + \frac{\xi f_{\ynoise}(\vx)}{\delta^2 \tau_{m_\vx, m_\vx}} + \frac{\xi O(\xi) f_{\ynoise}(\vx)}{\delta^2 \tau_{m_\vx, m_\vx}}\right] \nonumber\\
&\leq \pr\left[0 \leq \eta_{m_\vx}(\vx) - \eta_{s_\vx}(\vx) \leq  \frac{\epsilon}{\tau_{m_\vx, m_\vx}} + \frac{\xi}{\delta^2 \tau_{m_\vx, m_\vx}} + \frac{\xi^2}{\delta^2 \tau_{m_\vx, m_\vx}}\right] \nonumber\\
&\leq C\left[\frac{\epsilon}{\tau_{m_\vx, m_\vx}} + \frac{\xi}{\delta^2 \tau_{m_\vx, m_\vx}} + \frac{\xi^2}{\delta^2 \tau_{m_\vx, m_\vx}}\right]^\lambda
\end{align}

Here Tsybakove condition hold, because $\frac{\epsilon}{\tau_{m_\vx, m_\vx}} + \frac{\xi}{\delta^2 \tau_{m_\vx, m_\vx}} + \frac{\xi^2}{\delta^2 \tau_{m_\vx, m_\vx}} \leq \frac{t_0 \delta^2 \min\limits_i \tau_{ii}-\xi^2-\xi}{\delta^2\tau_{m_\vx, m_\vx}} + \frac{\xi}{\delta^2 \tau_{m_\vx, m_\vx}} + \frac{\xi^2}{\delta^2 \tau_{m_\vx, m_\vx}} \leq t_0$. As a result:

\begin{align*}
&\pr\left[\ynoise_{new}\neq h^*(\vx), \ynoise \text{ is accepted}\right] \\
&\leq (\ref{theorem3:part2}) + \pr\left[u_\vx \neq m_\vx, u_\vx \neq \ynoise\right] \\
&\leq C\left[O(\max(\epsilon, \xi))\right]^\lambda + \pr\left[u_\vx \neq m_\vx, u_\vx \neq \ynoise\right]\\
\end{align*}
which compete the proof for cases that are accepted.

Other terms will not be affected by the choice of $\delta$. By now we completes the proof. 
\end{proof}











